\newtheorem{thm}{Theorem}
\newtheorem{lem}[thm]{Lemma}
\newtheorem{prop}[thm]{Proposition}
\newtheorem{cor}[thm]{Corollary}
\newtheorem{obs}[thm]{Observation}
\newtheorem{defn}[thm]{Definition}
\newcommand{\corr}{\mbox{corr}}
\newcommand{\vF}{\mathbf{F}} 
\newcommand{\vh}{\mathbf{h}}
\newcommand{\vx}{\mathbf{x}}
\newcommand{\vb}{\mathbf{b}} 
\newcommand{\vu}{\mathbf{u}}   
\newcommand{\vl}{\mathbf{l}}
\newcommand{\vg}{\mathbf{g}}
\newcommand{\vz}{\mathbf{z}}
\newcommand{\valpha}{\vec{\alpha}}
\newcommand{\vzero}{\mathbf{0}}
\newcommand{\vone}{\mathbf{1}}
\DeclareMathOperator*{\argmin}{arg\,min}
\DeclareMathOperator*{\argmax}{arg\,max}
\DeclareMathOperator{\sgn}{sgn}
\DeclareMathOperator{\Prtxt}{Pr}
\newcommand{\RR}{\mathbb{R}}      
\newcommand{\vnorm}[1]{\left\lVert#1\right\rVert} 
\newcommand{\evp}[2]{\mathbb{E}_{#2} \left[#1\right]} 
\newcommand{\abs}[1]{\left| #1 \right|}
\newcommand{\pr}[1]{\Prtxt \left(#1\right)}
\newcommand{\zbr}{\textsc{ZBR}}
\newcommand{\wmv}{\textsc{WMV}}
\newcommand{\sighat}{\hat{\sigma}}
\newcommand{\cH}{\mathcal{H}}
\newcommand{\cX}{\mathcal{X}}
\newcommand{\cD}{\mathcal{D}}
\newcommand{\lrp}[1]{\left(#1\right)}
\newcommand{\lrb}[1]{\left[#1\right]}
\newcommand{\thickhline}{%
    \noalign {\ifnum 0=`}\fi \hrule height 1pt
    \futurelet \reserved@a \@xhline
}
\begin{document}

\title[Minimax Classifier Aggregation]{Optimally Combining Classifiers Using Unlabeled Data}

\coltauthor{\Name{Akshay Balsubramani} \Email{abalsubr@ucsd.edu} \\
\Name{Yoav Freund} \Email{yfreund@ucsd.edu}\\
\addr Dept. of CSE, University of California San Diego, La Jolla, CA 92093, USA
}

\maketitle

\begin{abstract}

We develop a worst-case analysis of aggregation of classifier ensembles for binary classification. 
The task of predicting to minimize error is formulated as a game played 
over a given set of unlabeled data (a transductive setting), 
where prior label information is encoded as constraints on the game. 
The minimax solution of this game identifies cases where a
weighted combination of the classifiers can perform significantly
better than any single classifier.

\end{abstract}
\begin{keywords}
Ensemble aggregation, transductive, minimax
\end{keywords}


\section{Introduction}
Suppose that we have a finite set, or ensemble, of binary classifiers
$\cH = \{ h_1,h_2,\ldots,h_p \}$, 
with each $h_i$ mapping data in some space $\cX$ to a binary prediction $\{-1,+1\}$. 
Examples $(x,y) \in \cX \times \{-1,+1\}$ 
are generated i.i.d. according to some fixed but unknown distribution $\cD$, 
where $y \in \{-1,+1\}$ is the class \emph{label} of the example. 
We write the expectation with respect to $\cD$ or one of its marginal distributions as $\evp{\cdot}{\cD}$.

Consider a statistical learning setting, 
in which we assume access to two types of i.i.d. data: a small set of labeled training examples  
$S = \{(x'_1,y'_1),\ldots(x'_m,y'_m)\}$ drawn from $\cD$ and a much larger
set of unlabeled test examples $U = \{x_1,\ldots,x_n\}$ drawn i.i.d. 
according to the marginal distribution over $\cX$ induced by $\cD$. 
A typical use of the labeled set is to find an upper bound on the expected error rate of each of the classifiers in the ensemble. 
Specifically, we assume a set of lower bounds $\{b_i\}_{i=1}^p > 0$ such 
that the correlation $\corr(h_i) := \evp{y h_i (x)}{\cD}$ satisfies $\corr(h_i) \geq b_i$.

If we ignore the test set, then the best we can do, in the worst case,
is to use the classifier with the largest correlation (smallest error).
This corresponds to the common practice of {\em empirical risk minimization} (ERM). 
However, in many cases we can glean useful
information from the distribution of the test set that will allow us
to greatly improve over ERM.

We motivate this statement by contrasting two simple prediction scenarios, A and B. 
In both cases there are $p=3$ classifiers and $n=3$ unlabeled test
examples. The correlation vector is $\vb = (1/3,1/3,1/3)$; equivalently,
the classifier error rates are $33\%$. 
Based on that information, 
the predictor knows that each classifier makes two correct predictions and one
incorrect prediction.

So far, both cases are the same. The difference is in the relations
between different predictions on the same example.  In case A, each
example has two predictions that are the same, and a third that is
different.  In this case it is apparent that the majority vote
over the three classifiers has to be correct on all 3 examples,
i.e. we can reduce the error from $\frac{1}{3}$ to $0$. In case B, all
three predictions are equal for all examples. In other words, the
three classification rules are exactly the same on the three examples, 
so there is no way to improve over any single rule.

These cases show that there is information in the unlabeled test
examples that can be used to reduce error -- indeed,  
cases A and B can be distinguished without using any labeled examples.
In this paper, we give a complete characterization of the optimal worst-case (minimax) predictions given the
correlation vector $\vb$ and the unlabeled test examples.

Our development does not consider the feature space $\cX$ directly, 
but instead models the knowledge of the $p$ ensemble predictions on $U$ 
with a $p \times n$ matrix that we denote by $\vF$. Our focus is on
how to use the matrix $\vF$ in conjunction with the correlation
vector $\vb$, to make minimax optimal predictions on the test examples.  

The rest of the paper is organized as follows. 
In Section~\ref{sec:setup} we introduce some additional notation. 
In Section~\ref{sec:game1} we formalize the above intuition as a zero-sum game between the predictor and
an adversary, and solve it, characterizing the minimax strategies for both
sides by minimizing a convex objective we call the slack function. 
This solution is then linked to a statistical learning algorithm in Section \ref{sec:uniform-convergence}. 

In Section~\ref{sec:discthmpred}, we interpret the slack
function and the minimax strategies, providing more toy examples following the one given above to build intuition. 
In Section~\ref{sec:alg} we focus on computational issues in running the statistical learning algorithm. 
After discussing relations to other work in Section~\ref{sec:relwork}, we conclude in Section~\ref{sec:openproblems}.

\section{Preliminaries}
\label{sec:setup}

The main tools we use in this paper are linear programming and uniform
convergence. We therefore use a combination of matrix notation and
the probabilistic notation given in the introduction. 
The algorithm is first described in a deterministic context where some inequalities are assumed to hold; probabilistic
arguments are used to show that these assumptions are correct with high probability.

The ensemble's predictions on the unlabeled data are denoted by $\vF$:
\begin{equation}
\vF = 
 \begin{pmatrix}
   h_1(x_1) & h_1(x_2) & \cdots & h_1 (x_n) \\
   \vdots   & \vdots    & \ddots &  \vdots  \\
   h_p(x_1)  &  h_p (x_2)  & \cdots &  h_p (x_n)
 \end{pmatrix}
 \in [-1, 1]^{p \times n}
\end{equation}
The \textbf{true labels} on the test data $U$ are represented by $\vz
= (z_1; \dots; z_n) \in [-1,1]^n$.

Note that we allow $\vF$ and $\vz$ to
take any value in the range $[-1, 1]$ rather than just the two
endpoints. This relaxation does not change the analysis, because intermediate
values can be interpreted as the expected value of randomized
predictions.  For example, a value of $\frac{1}{2}$ indicates $\{+1\;
\text{w.p.}\; \frac{3}{4} $, $-1\; \text{w.p.}\; \frac{1}{4} \}$. This
interpretation extends to our definition of the correlation
on the test set,
$\widehat{\corr}_{U} (h_i) = \frac{1}{n} \sum_{j=1}^n h_i (x_j) z_j$.~\footnote{We are slightly abusing the
  term ``correlation'' here. Strictly speaking this is just the expected
  value of the product, without standardizing by mean-centering and rescaling for unit variance. 
  We prefer this to inventing a new term.}

The labels $\vz$ are hidden from the predictor, 
but we assume the predictor has knowledge of a {\bf correlation vector}
$\vb \geq \vzero^n$ such that $\widehat{\corr}_{U} (h_i) \geq b_i$ for all $i \in [p]$, 
i.e. $ \frac{1}{n} \vF \vz \geq \vb$. 
From our development so far, the correlation vector's components $b_i$ each correspond 
to a constraint on the corresponding classifier's test error $\frac{1}{2} (1 - b_i)$. 

The following notation is used throughout the paper: $[a]_{+} = \max (0, a)$ and $[a]_{-} = [-a]_{+}$,  
$[n] = \{ 1,2,\dots,n \}$, $\vone^n = (1; 1; \dots; 1) \in \RR^n$, and $\vzero^n$
similarly.  Also, write $I_n$ as the $n \times n$ identity matrix.
All vector inequalities are componentwise. 
The probability simplex in $d$ dimensions is denoted by $\Delta^d = \{ \sigma \geq \vzero^d : \sum_{i=1}^d \sigma_i = 1 \}$.
Finally, we use vector notation for the rows and columns of $\vF$: 
$\vh_i = (h_i(x_1), h_i(x_2), \cdots, h_i (x_n))^\top$ and $\vx_j =
(h_1(x_j), h_2(x_j), \cdots, h_p (x_j))^\top$.


\section{The Transductive Binary Classification Game}
\label{sec:game1}

We now describe our prediction problem, and formulate it as a zero-sum game between 
two players: a predictor and an adversary.

In this game, the predictor is the first player, 
who plays $\vg = (g_1; g_2; \dots; g_n)$, 
a randomized label $g_i \in [-1,1]$ for each example $\{\vx_i\}_{i=1}^{n}$. 
The adversary then plays, setting the labels $\vz \in [-1,1]^n$ 
under ensemble test error constraints defined by $\vb$. 
The predictor's goal is to minimize (and the adversary's to maximize) 
the \emph{worst-case expected classification error on the test data} 
(w.r.t. the randomized labelings $\vz$ and $\vg$): 
$\frac{1}{2} \lrp{1 - \frac{1}{n} \vz^\top \vg }$. 
This is equivalently viewed as maximizing worst-case correlation $\frac{1}{n} \vz^\top \vg $. 

To summarize concretely, we study the following game:
\begin{align}
\label{game1eq}
\displaystyle 
V := \max_{\vg \in [-1,1]^n} \; \min_{\substack{ \vz \in [-1,1]^n , \\ \frac{1}{n} \vF \vz \geq \vb }} \;\; \frac{1}{n} \vz^\top \vg
\end{align}

It is important to note that we are only modeling ``test-time''
prediction, and represent the information gleaned from the labeled data
by the parameter $\vb$. Inferring the vector $\vb$ from training data
is a standard application of Occam's Razor \cite{BEHW87}, which we provide in
Section~\ref{sec:uniform-convergence}.

The minimax theorem (e.g. \cite{CBL06}, Theorem 7.1) applies to the game \eqref{game1eq}, 
since the constraint sets are convex and compact and the payoff linear. 
Therefore, it has a minimax equilibrium and associated optimal 
strategies $\vg^*, \vz^*$ for the two sides of the game, i.e. 
$\min_{\vz}\; \frac{1}{n} \vz^\top \vg^* = V = \max_{\vg} \frac{1}{n} \vz^{*^\top} \vg$ .

As we will show, both optimal strategies are simple functions of a
particular \emph{weighting} over the $p$ hypotheses -- a nonnegative $p$-vector. 
Define this weighting as follows.
\begin{defn}[\textbf{Slack Function and Optimal Weighting}]
Let $\sigma \geq 0^p$ be a weight vector over $\cH$ (not necessarily a distribution).
The vector of \textbf{ensemble predictions} is
$\vF^\top \sigma = (\vx_1^\top \sigma, \dots, \vx_n^\top \sigma)$, 
whose elements' magnitudes are the \textbf{margins}. 
The \textbf{prediction slack function} is
\begin{align}
\label{eqn:slack}
\gamma (\sigma, \vb) = \gamma (\sigma) := \frac{1}{n} \sum_{j=1}^n \left[ \abs{\vx_{j}^\top \sigma} - 1 \right]_{+} - \vb^\top \sigma
\end{align}
An \textbf{optimal weight vector} $\sigma^*$ is any minimizer of the slack function: 
$\displaystyle \sigma^* \in \argmin_{\sigma \geq 0^p} \left[ \gamma (\sigma) \right]$.
\end{defn}

Our main result uses these to describe the solution of the game \eqref{game1eq}.
\begin{thm}[Minimax Equilibrium of the Game]
\label{thm:gamesolngen}
The minimax value of the game \eqref{game1eq} is 
$V = - \gamma (\sigma^*)$. 
The minimax optimal strategies are defined as follows:
for all $i \in [n]$,
\begin{align}
g_i^* \doteq g_i (\sigma^*) = \begin{cases} \vx_{i}^\top \sigma^* & \abs{\vx_{i}^\top \sigma^*} < 1 \\ 
\sgn(\vx_{i}^\top \sigma^*) & \mbox{otherwise} \end{cases}
\quad \quad \text{and} \quad \quad
z_i^* = 
\begin{cases} 
0 & \abs{\vx_{i}^\top \sigma^*} < 1 \\ 
\sgn(\vx_{i}^\top \sigma^*) & \abs{\vx_{i}^\top \sigma^*} > 1 
\end{cases}
\label{eqn:opt-strats}
\end{align}
\end{thm}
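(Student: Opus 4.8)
The plan is to apply the minimax theorem (valid here, as noted above), which yields a finite value $V$ together with optimal strategies, and then to evaluate $V$ and exhibit a saddle point explicitly. Write $V=\max_{\vg\in[-1,1]^n}\min_{\vz}\frac{1}{n}\vz^\top\vg$ and fix $\vg$: the inner minimization is a linear program over the box $[-1,1]^n$ with the linear constraints $\frac{1}{n}\vF\vz\geq\vb$. Attaching a multiplier $\sigma\geq 0^p$ to those constraints gives the Lagrangian $\frac{1}{n}\vz^\top(\vg-\vF^\top\sigma)+\vb^\top\sigma$, which decouples over the coordinates of $\vz$ and is minimized over $[-1,1]^n$ by $z_i=-\sgn(g_i-\vx_i^\top\sigma)$, so LP strong duality gives
\[
\min_{\vz\in[-1,1]^n,\;\frac{1}{n}\vF\vz\geq\vb}\;\frac{1}{n}\vz^\top\vg \;=\; \max_{\sigma\geq 0^p}\left[\vb^\top\sigma-\frac{1}{n}\vnorm{\vg-\vF^\top\sigma}_1\right].
\]
Substituting back and interchanging the two maximizations, $V=\max_{\sigma\geq 0^p}\max_{\vg\in[-1,1]^n}\bigl[\vb^\top\sigma-\frac{1}{n}\vnorm{\vg-\vF^\top\sigma}_1\bigr]$; for fixed $\sigma$ the inner maximum over $\vg$ is attained by clipping $\vF^\top\sigma$ coordinatewise into $[-1,1]$, i.e. by the $\vg$ that at $\sigma=\sigma^*$ is the $\vg^*$ of \eqref{eqn:opt-strats}, and equals $-\frac{1}{n}\sum_j[\abs{\vx_j^\top\sigma}-1]_+$. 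Hence $V=\max_{\sigma\geq 0^p}[-\gamma(\sigma)]=-\gamma(\sigma^*)$, and taking $\sigma=\sigma^*$ in the clipping shows $\vg^*$ is optimal for the predictor.

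For the adversary's optimal play I would invoke the first-order (KKT) conditions of the convex, nonsmooth program $\min_{\sigma\geq 0^p}\gamma(\sigma)$. The scalar map $a\mapsto[\abs{a}-1]_+$ has derivative $\sgn(a)$ for $\abs{a}>1$, derivative $0$ for $\abs{a}<1$, and subdifferential the segment between $0$ and $\sgn(a)$ for $\abs{a}=1$; by the chain rule $\partial_\sigma[\abs{\vx_j^\top\sigma}-1]_+=\{\,t\,\vx_j:\ t\ \text{in that (sub)derivative at}\ \vx_j^\top\sigma\}$. Optimality of $\sigma^*$ over the nonnegative orthant then asserts that there exist coefficients $t_j$ of this form — so $t_j=\sgn(\vx_j^\top\sigma^*)$ when $\abs{\vx_j^\top\sigma^*}>1$ and $t_j=0$ when $\abs{\vx_j^\top\sigma^*}<1$ — such that $\vv:=\frac{1}{n}\sum_j t_j\vx_j-\vb$ satisfies $\vv\geq 0^p$ and $\vv^\top\sigma^*=0$. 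Set $z_j^*:=t_j$; this matches \eqref{eqn:opt-strats} on the two strict cases and resolves the ties $\abs{\vx_j^\top\sigma^*}=1$. Then $\frac{1}{n}\vF\vz^*=\vv+\vb\geq\vb$, so $\vz^*$ is feasible; moreover $t_j(\vx_j^\top\sigma^*)=\abs{t_j}\,\abs{\vx_j^\top\sigma^*}$ for every $j$, so complementary slackness rearranges to $\vb^\top\sigma^*=\frac{1}{n}\sum_j\abs{t_j}\,\abs{\vx_j^\top\sigma^*}$, which with a short coordinatewise check gives $\frac{1}{n}\vnorm{\vz^*}_1=\vb^\top\sigma^*-\frac{1}{n}\sum_j[\abs{\vx_j^\top\sigma^*}-1]_+=-\gamma(\sigma^*)=V$. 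Since any feasible $\vz$ guarantees the adversary exactly $\max_{\vg\in[-1,1]^n}\frac{1}{n}\vz^\top\vg=\frac{1}{n}\vnorm{\vz}_1$, this shows $\vz^*$ attains the value and is minimax optimal; together with optimality of $\vg^*$ it follows that $\frac{1}{n}(\vz^*)^\top\vg^*=V$, so $(\vg^*,\vz^*)$ is an equilibrium.

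I expect the main obstacle to be the bookkeeping at boundary examples, where $\abs{\vx_j^\top\sigma^*}=1$: the formula for $\vz^*$ in \eqref{eqn:opt-strats} is deliberately silent there, and one must use precisely the freedom in the subgradient of $\gamma$ at $\sigma^*$ to pick the corresponding $z_j^*\in[-1,1]$ so that feasibility and the slackness identity hold simultaneously (generically no such $j$ occurs, and $\vz^*$ is then fully determined by \eqref{eqn:opt-strats}). The remaining steps are routine LP duality and a coordinatewise $\ell_1$ computation. I would also dispatch the degenerate case in which $\{\vz\in[-1,1]^n:\frac{1}{n}\vF\vz\geq\vb\}$ is empty — equivalently $\gamma$ is unbounded below — which is exactly what the standing assumption that a minimizer $\sigma^*$ exists rules out.
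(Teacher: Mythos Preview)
Your derivation of $V$ and $\vg^*$ is essentially identical to the paper's Lemmas~\ref{lem:gamegeng} and~\ref{lem:game1gopt}: dualize the inner minimization in $\vz$, swap the two maximizations, and observe that the optimal $\vg$ for fixed $\sigma$ is the coordinatewise clipping of $\vF^\top\sigma$.

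Where you diverge is in deriving $\vz^*$. The paper (Lemma~\ref{lem:game1zopt}) rewrites the adversary's problem $\min\{\frac{1}{n}\vnorm{\vz}_1:\vz\in[-1,1]^n,\vF\vz\geq n\vb\}$ as an LP in the split variables $\zeta=([z_j]_+,[z_j]_-)$, writes down the explicit LP dual, and reads off $\vz^*$ from LP complementary slackness between $\zeta^*$ and the dual variable (whose first $p$ coordinates are $\sigma^*$). You instead work directly with the KKT conditions of the nonsmooth convex program $\min_{\sigma\geq 0}\gamma(\sigma)$: the subgradient coefficients $t_j$ at $\sigma^*$ \emph{are} the $z_j^*$, feasibility is the inequality part of the KKT condition, and the value computation $\frac{1}{n}\vnorm{\vz^*}_1=-\gamma(\sigma^*)$ is the complementary-slackness part. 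This is correct and arguably cleaner than the paper's route --- it avoids the auxiliary $2n$-dimensional LP entirely and, as you note, automatically furnishes the borderline values $z_j^*$ when $\abs{\vx_j^\top\sigma^*}=1$ (which the paper defers to Corollary~\ref{cor:zoptfullpred} via the same subgradient identity~\eqref{gammasgexact}). The paper's LP reformulation, on the other hand, makes the connection to standard LP duality fully explicit and requires no familiarity with nonsmooth KKT conditions.
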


\begin{figure}
\centering
\includegraphics[width=0.55\textwidth]{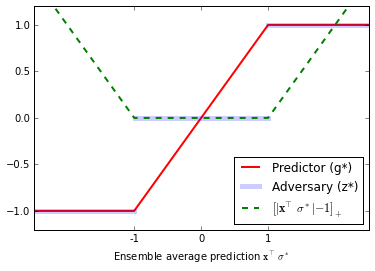}
\caption{
\label{fig:optstrats}
The optimal strategies and slack function as a function of the ensemble prediction $\vx^\top \sigma^*$.}
\end{figure}

The proof of this theorem is a standard application of Lagrange duality and the minimax theorem.
The minimax value of the game and the optimal strategy for the predictor $\vg^*$ (Lemma~\ref{lem:game1gopt}) 
are our main objects of study and are completely characterized, 
and the theorem's partial description of $\vz^*$ (proved in Lemma \ref{lem:game1zopt}) 
will suffice for our purposes. 
\footnote{For completeness, Corollary \ref{cor:zoptfullpred} in the appendices 
specifies $z_i^*$ when $\abs{\vx_{i}^\top \sigma^*} = 1$.}

Theorem \ref{thm:gamesolngen} illuminates the importance of the optimal weighting $\sigma^*$ over hypotheses. 
This weighting $\sigma^* \in \argmin_{\sigma \geq 0^p} \gamma (\sigma)$ is the solution 
to a convex optimization problem (Lemma \ref{lem:helperthreshcvx}), 
and therefore we can efficiently compute it and $\vg^*$ to any desired accuracy. 
The ensemble prediction (w.r.t. this weighting) on the test set is $\vF^\top \sigma^*$, 
which is the only dependence of the solution on $\vF$. 

More specifically, the minimax optimal prediction and label \eqref{eqn:opt-strats} on any test set example $\vx_j$ 
can be expressed as functions of the ensemble prediction $\vx_j^\top \sigma^*$ 
on that test point alone, without considering the others. 
The $\vF$-dependent part of the slack function also depends separately on each test point's ensemble prediction. 
Figure \ref{fig:optstrats} depicts these three functions.

\section{Bounding the Correlation Vector}
\label{sec:uniform-convergence}

In the analysis presented above we assumed that a correlation vector $\vb$ is given, 
and that each component is guaranteed to be a lower bound on the test correlation 
of the corresponding hypothesis. 
In this section, we show how $\vb$ can be calculated from a labeled training set.

The algorithm that we use is a natural one using uniform convergence: we compute the empirical
correlations for each of the $p$ classifiers, and add a uniform penalty term to guarantee
that the $\vb$ is a lower bound on the correlation of the test data.
For each classifier, we consider three quantities:
\begin{itemize}
\item The true correlation: $\corr(h) = \evp{y h(x)}{\cD}$
\item The correlation on the training set of labeled data: $\widehat{\corr}_{S}(h) =
  \frac{1}{m} \sum_{i=1}^m h(x'_i) y'_i$
\item The correlation on the test set of unlabeled data: $\widehat{\corr}_{U}(h) =
  \frac{1}{n} \sum_{i=1}^n h(x_i) z_i$
\end{itemize}
Using Chernoff bounds, we can show that the training and test
correlations are concentrated near the true correlation. 
Specifically, for each individual classifier $h$ we have the two inequalities 
\begin{eqnarray*}
\pr{\widehat{\corr}_{S} (h) > \corr(h) + \epsilon_{S}} \leq e^{-2m \epsilon_{S}^2} \\
\pr{\widehat{\corr}_{U} (h) < \corr(h) - \epsilon_{U}} \leq e^{-2n \epsilon_{U}^2} 
\end{eqnarray*}

Let $\delta$ denote the probability we allow for failure. 
If we set $\epsilon_{S} = \sqrt{ \frac{\ln(2p/\delta)}{2m} }$ and $\epsilon_{U} = \sqrt{ \frac{\ln(2p/\delta)}{2n} }$, 
we are guaranteed that \emph{all} the $2p$ inequalities hold concurrently with probability at least $1-\delta$.

We thus set the correlation bound to:
\[
b_i := \widehat{\corr}_{S} (h_i) - \epsilon_{S} - \epsilon_{U}
\]
and have with probability $\geq 1-\delta$ that $\vb$ is a good correlation vector, i.e. $\widehat{\corr}_{U} (h_i) \geq b_i \quad\forall i \in [p]$.


\section{Interpretation and Discussion}
\label{sec:discthmpred}

Given $\sigma$, we partition the examples $\vx$ into three subsets, depending on the value of the ensemble prediction: 
the {\bf hedged set} $ H (\sigma) := \left\{ \vx :  |\vx^\top \sigma|<1 \right\} $, 
the {\bf clipped set} $C (\sigma) := \left\{ \vx :  |\vx^\top \sigma|>1 \right\} $, 
and the {\bf borderline set} $B (\sigma) := \left\{ \vx :  |\vx^\top \sigma| = 1 \right\} $.
Using these sets, we now give some intuition regarding the optimal
choice of $\vg$ and $\vz$ given in~(\ref{eqn:opt-strats}), for some fixed $\sigma$.

Consider first examples $\vx_i$ in $H(\sigma)$. 
Here the optimal $g_i$ is
to predict with the ensemble prediction $\vx_i^\top \sigma$, a number in $(-1, 1)$. 
Making such an intermediate prediction
might seem to be a type of calibration, but this view is misleading. The
optimal strategy for the adversary in this case is to set $z_i = 0$, 
equivalent to predicting $\pm 1$ with probability $1/2$ each. 
The reason that the learner hedges is because if $g_i < \vx_i^\top \sigma$, 
the adversary would respond with $z_i = 1$
and with $z_i = -1$ if $g_i > \vx_i^\top \sigma$. 
In either case, the loss of the predictor would increase. 
In other words, our ultimate rationale for hedging is not calibration, 
but rather ``defensive forecasting'' in the spirit of \cite{VTS05}.

Next we consider the clipped set $\vx_j \in C(\sigma)$. In this case, the adversary's optimal strategy
is to predict deterministically, and so the learner matches the adversary here. 
It is interesting to note that with all else held equal, increasing the margin 
$\abs{\vx_j^\top \sigma}$ beyond 1 is suboptimal for the learner. 
Qualitatively, the reason is that while $\vx_j^\top \sigma$
continues to increase, the prediction for the learner is clipped, 
and so the value for the learner does not increase with the ensemble prediction.

\subsection{Subgradient Conditions}
For another perspective on the result of Theorem~\ref{thm:gamesolngen}, 
consider the subdifferential set of the slack function $\gamma$ at an arbitrary weighting $\sigma$:
\begin{align}
\label{subdiffgammasig}
\partial \gamma (\sigma) 
= \left\{ \frac{1}{n} \lrp{ \sum_{\vx_j \in C(\sigma)} \vx_j \sgn(\vx_j^\top \sigma) 
+ \sum_{\vx_j \in B(\sigma)} c_j \vx_j \sgn(\vx_j^\top \sigma) } - \vb
, \quad \forall c_j \in [0,1]  \right\}
\end{align}
Note that the hedged set plays no role in $\partial \gamma (\sigma)$.
Since the slack function $\gamma(\cdot)$ is convex (Lemma \ref{lem:helperthreshcvx}), 
the sub-differential set \eqref{subdiffgammasig} at any optimal weighting $\sigma^*$ contains $\vec{0}$, 
i.e.,
\begin{align}
\label{gammasgexact}
\exists c_j \in [0,1] \qquad s.t. \qquad 
n \vb - \sum_{j : \vx_{j}^\top \sigma^* > 1} \vx_{j} + \sum_{j : \vx_{j}^\top \sigma^* < -1} \vx_{j} 
&= \sum_{j : \abs{\vx_{j}^\top \sigma^*} = 1} c_j \vx_{j} \sgn(\vx_{j}^\top \sigma^*)
\end{align}
The geometric interpretation of this equation is given in Figure~\ref{fig:optsigma}. 
The optimal weighting $\sigma^*$ partitions the examples into five sets: 
hedged, positive borderline and positive clipped, and negative borderline and negative clipped. 
Taking the difference between the sum of the positive clipped and the sum of the negative
clipped examples gives a vector that is approximately $\vb$. 
By adding a weighted sum of the borderline examples, $\vb$ can be obtained exactly.

\begin{figure}
\label{fig:optsigma}
\centering
\includegraphics[width=0.6\textwidth]{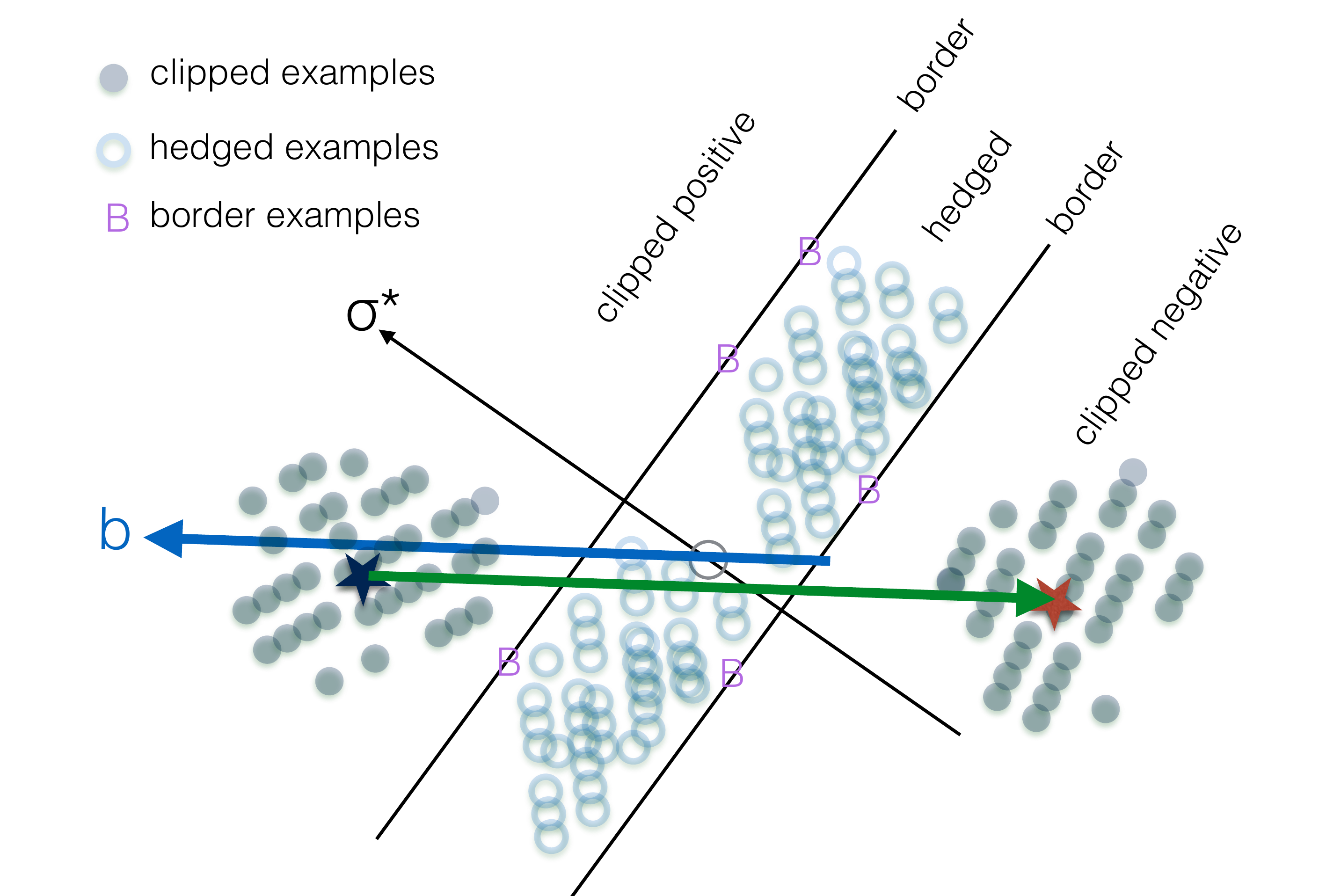}
\caption{\small An illustration of the optimal $\sigma^* \geq 0^p$. 
The vector $n \vb$ is the difference between the sums of two categories of clipped examples: 
those with high ensemble prediction ($\vx^\top \sigma^* > 1$) and low prediction ($< -1$).
The effect of $B (\sigma^*)$ is neglected for simplicity. 
}
\end{figure}

\subsection{Beating ERM Without Clipping}
We now make some brief observations about the minimax solution.

First, note that no $\sigma$ such that $\vnorm{\sigma}_1 < 1$ can be optimal, 
because in such a case $-\gamma(\sigma) > -\gamma \lrp{\frac{\sigma}{\vnorm{\sigma}_1}}$; 
therefore, $\vnorm{\sigma^*}_1 \geq 1$.

Next, suppose we do not know the matrix $\vF$. 
Then $\|\sigma^*\|_1=1$. This can be shown by proving the contrapositive. 
Assuming the negation $1 < \|\sigma^*\|_1 := a$, there exists a 
vector $\vx \in [-1,1]^p$ such that $\vx^\top \sigma^* = \|\sigma^*\|_1 > 1$. 
If each of the columns of $\vF$ is equal to $\vx$, 
then by definition of the slack function, $-\gamma \lrp{\frac{\sigma^*}{a}} > -\gamma(\sigma^*)$, 
so $\sigma^*$ cannot be optimal. 

In other words, if we want to protect ourselves against the worst case $\vF$, then
we have to set $\|\sigma\|_1=1$ so as to ensure that $C(\sigma)$ is empty. 
In this case, the slack function simplifies to $\gamma (\sigma) = - \vb^\top \sigma$, 
over the probability simplex. 
Minimizing this is achieved by setting $\sigma_i$ to be $1$ at $\displaystyle \argmax_{i \in [p]} b_i$ and zero elsewhere. 
So as might be expected, in the case that $\vF$ is unknown, the
optimal strategy is simply to use the classifier with the best error guarantee. 

This is true because $C(\sigma^*)$ is empty, 
and the set of all $\sigma$ such that $C(\sigma^*)$ is empty is of wider interest. 
We dub it the \emph{Zero Box Region}: $\zbr = \left\{ \sigma : C(\sigma) = \emptyset \right\}$.
Another clean characterization of the $\zbr$ can be made by using a 
duality argument similar to that used to prove Theorem \ref{thm:gamesolngen}. 

\begin{thm}
\label{thm:zbrunconstr}
The best weighting in $\zbr$ satisfies 
$\displaystyle \max_{\substack{ \abs{\vF^\top \sigma} \leq \mathbf{1}^n , \\ \sigma \geq 0^p }} \; \vb^\top \sigma 
= \max_{\vg \in [-1,1]^n} \;\min_{\frac{1}{n} \vF \vz \geq \vb } \;\; \frac{1}{n} \vz^\top \vg  $\;. 
In particular, the optimal $\sigma^* \in \zbr$ if and only if 
the hypercube constraint $\vz \in [-1,1]^{n}$ is superfluous, i.e. when 
$\displaystyle V = \min_{ \frac{1}{n} \vF \vz \geq \vb } \;\max_{\vg \in [-1,1]^n} \;\; \frac{1}{n} \vz^\top \vg $\;.
\end{thm}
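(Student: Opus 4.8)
The plan is to run the same Lagrange-duality argument used for Theorem~\ref{thm:gamesolngen}, but dualizing only the correlation constraints $\frac{1}{n}\vF\vz\ge\vb$ and leaving the hypercube constraint on $\vz$ out of the picture. First I would establish the displayed equality by fixing the predictor's play $\vg\in[-1,1]^n$ and treating the adversary's problem $\min\{\frac{1}{n}\vz^\top\vg : \frac{1}{n}\vF\vz\ge\vb\}$ as a linear program in $\vz\in\RR^n$. Attaching a nonnegative multiplier $\sigma\in\RR^p$ to the $p$ correlation constraints gives the Lagrangian $\frac{1}{n}\vz^\top(\vg-\vF^\top\sigma)+\vb^\top\sigma$, whose minimum over unconstrained $\vz$ equals $\vb^\top\sigma$ when $\vF^\top\sigma=\vg$ and $-\infty$ otherwise. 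LP strong duality — valid since $\{\vz:\frac{1}{n}\vF\vz\ge\vb\}$ is nonempty by the standing assumption on $\vb$ — then yields $\min\{\frac{1}{n}\vz^\top\vg : \frac{1}{n}\vF\vz\ge\vb\}=\max\{\vb^\top\sigma:\sigma\ge0^p,\ \vF^\top\sigma=\vg\}$, with the convention that the right side is $-\infty$ when no such $\sigma$ exists. Maximizing over $\vg\in[-1,1]^n$ and absorbing the equality constraint $\vF^\top\sigma=\vg$ into the range of $\vg$ gives $\max_{\vg}\min_{\vz}\frac{1}{n}\vz^\top\vg=\max\{\vb^\top\sigma:\sigma\ge0^p,\ \abs{\vF^\top\sigma}\le\vone^n\}$, which is the left-hand side of the theorem, i.e. the best weighting in $\zbr$.

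Next I would connect this quantity to the slack function. For $\sigma\in\zbr$ every margin satisfies $\abs{\vx_j^\top\sigma}\le1$, so the $[\,\cdot\,]_+$ terms in $\gamma$ vanish and $\gamma(\sigma)=-\vb^\top\sigma$ there; hence $\max\{\vb^\top\sigma:\sigma\in\zbr,\ \sigma\ge0^p\}=-\min\{\gamma(\sigma):\sigma\in\zbr,\ \sigma\ge0^p\}$, whereas Theorem~\ref{thm:gamesolngen} gives $V=-\gamma(\sigma^*)=-\min_{\sigma\ge0^p}\gamma(\sigma)$. Since $\zbr\cap\{\sigma\ge0^p\}\subseteq\{\sigma\ge0^p\}$, the $\zbr$-restricted value is always $\le V$, with equality exactly when the unconstrained minimum of $\gamma$ is attained inside $\zbr$ — that is, when the optimal weighting $\sigma^*$ lies in $\zbr$.

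To match this with the stated ``superfluous hypercube'' condition I would dualize once more, now the min--max form. Since $\max_{\vg\in[-1,1]^n}\frac{1}{n}\vz^\top\vg=\frac{1}{n}\vnorm{\vz}_1$, the right-hand side of the superfluity condition is the linear program $\min\{\frac{1}{n}\vnorm{\vz}_1:\frac{1}{n}\vF\vz\ge\vb\}$, whose Lagrangian dual (again attaching $\sigma\ge0^p$ to the correlation constraints) is $\max\{\vb^\top\sigma:\sigma\ge0^p,\ \vnorm{\vF^\top\sigma}_\infty\le1\}=\max\{\vb^\top\sigma:\sigma\in\zbr,\ \sigma\ge0^p\}$. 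LP strong duality (primal feasible, objective bounded below by $0$) makes this an equality, so the min--max value coincides with the best weighting in $\zbr$ \emph{unconditionally}. Chaining the three identities then gives $\sigma^*\in\zbr\iff V$ equals the best $\zbr$-weighting $\iff V=\min_{\frac{1}{n}\vF\vz\ge\vb}\max_{\vg\in[-1,1]^n}\frac{1}{n}\vz^\top\vg$, which is the claim.

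I expect the main obstacle to be the bookkeeping forced by noncompactness: neither the adversary's polyhedron $\{\vz:\frac{1}{n}\vF\vz\ge\vb\}$ nor the weight cone $\{\sigma\ge0^p\}$ is compact, so Sion's minimax theorem cannot be invoked directly for the swaps; each swap must instead be justified as a linear program via LP strong duality, and the values of $\vg$ for which the inner LP is unbounded (equivalently, those $\vg$ not of the form $\vF^\top\sigma$ for any $\sigma\ge0^p$) must be handled with the $-\infty$ convention so that they drop out of the outer maximum. The only other point needing care is the reading of ``$\sigma^*\in\zbr$'' when $\gamma$ has several minimizers: the precise statement the duality chain certifies is that the optimum of $\gamma$ over $\{\sigma\ge0^p\}$ is attained within $\zbr$, and this is what should be recorded.
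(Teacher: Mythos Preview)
Your proposal is correct and rests on the same LP-duality idea as the paper, but the two dualize in different directions. The paper starts from the $\sigma$-side: it writes $\max_{\sigma\in\zbr}\vb^\top\sigma$ as a standard-form LP (via $A=[\vF,-\vF]$), takes its LP dual to land on $\min_{\frac{1}{n}\vF\vz\ge\vb}\frac{1}{n}\vnorm{\vz}_1$, and then swaps $\min_\vz\max_\vg$ for $\max_\vg\min_\vz$ by a one-line appeal to the minimax theorem. You instead start from the game side, dualize the inner $\min_\vz$ for fixed $\vg$ to obtain the equality constraint $\vF^\top\sigma=\vg$, and then absorb $\vg$; you separately dualize $\min\frac{1}{n}\vnorm{\vz}_1$ to the same $\zbr$-constrained maximum, so both $\max_\vg\min_\vz$ and $\min_\vz\max_\vg$ are shown equal to $\max_{\sigma\in\zbr}\vb^\top\sigma$ by independent LP-duality computations rather than by a minimax swap. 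This buys you exactly the robustness you flag in your last paragraph: the adversary's polyhedron is noncompact, so the paper's direct invocation of Sion is not literally justified, whereas your route avoids it. Your treatment of the ``if and only if'' via $\gamma(\sigma)=-\vb^\top\sigma$ on $\zbr$ is also more explicit than the paper's, which only spells out the forward implication.
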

The $\zbr$ is where the optimal strategy is always to hedge 
and never to incorporate any clipping. 
Consider a situation in which the solution is in $\zbr$, $\sigma^*=\vone^p$, 
and all of the predictions are binary: $\vF \in \{-1,+1\}^{p \times n}$.
This is an ideal case for our method; instead of the baseline value $\max_i b_i$
obtained when $\vF$ is unknown, we get a superior value of $\sum_i b_i$. 

In fact, we referred to such a case in the introduction, 
and we present a formal version here. 
Take $p$ to be odd and suppose that $n=p$. 
Then set $\vF$ to be a matrix where each row (classifier) and each column (example)
contains $(p+1)/2$ entries equal to $+1$ and $(p-1)/2$ entries equal to $-1$.
\footnote{For instance, by setting $F_{ij} = 1$ if $(i+j) \text{ is even}$, and $-1$ otherwise.} 
Finally choose an arbitrary subset of the columns (to have true label $-1$), 
and invert all their entries. 

In this setup, all classifiers (rows) have the same error: $\vb= \frac{1}{p} \vone^p$. 
The optimal weight vector in this case is $\sigma^*=\vone^p$, 
the solution is in $\zbr$ because $\abs{\vx^\top \sigma^*} = 1 \;\forall \vx$, and the minimax value is $V=1$, 
which corresponds to zero error. 
Any single rule has an error of $\frac{1}{2} - \frac{1}{p}$, 
so using $\vF$ with $p$ classifiers has led to a $p$-fold improvement over random guessing!

Of course, this particular case is extremal in some ways; in order to be in $\zbr$, 
there must be many cancellations in $\vF^\top \sigma^*$. 
This echoes the common heuristic belief that, when combining an ensemble of classifiers, 
we want the classifiers to be ``diverse" (e.g. \cite{K03}). 
The above example in fact has the maximal average disagreement between pairs of classifiers for a fixed $p$. 
Similar results hold if $\vF$ is constructed using independent random draws. 

So our formulation recovers ERM without knowledge of $\vF$, 
and can recover an (unweighted) majority vote in cases where this provides dramatic performance improvements. 
The real algorithmic benefit of our unified formulation is in automatically interpolating between these extremes. 

To illustrate, suppose $\vF$ is given in Table~\ref{tbl:Example}, in which there are six classifiers partitioned into two blocs, 
and six equiprobable test examples. 
Here, it can be seen that the true labeling must be $+$ on all examples.

\begin{table}[H]
\begin{center}
\begin{tabular}{||c||c|c|c||c|c|c|}
    \hline &\multicolumn{3}{c||}{A classifiers} & \multicolumn{3}{c|}{B classifiers} \\
    \hline \hline
    &$h_1$ &$h_2$	&$h_3$	&$h_4$ &$h_5$ &$h_6$ 	\\
    \hline \hline
    $x_1$ &- 	&	+	&	+	& + & + & + 	\\ \hline
    $x_2$ &- 	&	+	&	+	& + & + & + 	\\ \hline 
    $x_3$ &+ 	&	-	&	+	& + & + & +     \\ \hline
    $x_4$ &+ 	&	-	&	+	& + & + & +     \\ \hline 
    $x_5$ &+ 	&	+	&	-	& + & + & +	\\ \hline 
    $x_6$ &+ 	&	+	&	-	& -  & -  & -	\\ \hline \thickhline
    $\vb$  &1/3 	&1/3	&1/3	&2/3  &2/3  &2/3	\\ \thickhline
\end{tabular}
\end{center}
\vspace{-2mm}
\caption{\label{tbl:Example}Example with two classifier blocs.}
\end{table}

In this situation, the best single rule errs on $x_6$, as does an (unweighted majority) vote over the six classifiers, 
and even a vote over just the better-performing ``B" classifiers. 
But a vote over the ``A'' rules makes no errors, and our algorithm recovers it 
with a weighting of $\sigma^* = (1;1;1;0;0;0)$.

\subsection{Approximate Learning}

Another consequence of our formulation 
is that predictions of the form $\vg (\sigma)$ 
are closely related to dual optima and the slack function. 
Indeed, by definition of $\vg (\sigma)$, the slack function value 
$ - \gamma (\sigma) = \vb^\top \sigma - \frac{1}{n} \vnorm{\vF^\top \sigma - \vg (\sigma)}_1 
\leq \max_{\sigma' \geq 0^p} \left[ \vb^\top \sigma' - \frac{1}{n} \vnorm{\vF^\top \sigma' - \vg (\sigma)}_1 \right]$, 
which is simply the dual problem (Lemma \ref{lem:gamegeng}) of the worst-case correlation suffered by $\vg (\sigma)$: 
$\displaystyle \quad \min_{\substack{ \vz \in [-1,1]^n , \\ \frac{1}{n} \vF \vz \geq \vb }} \;\frac{1}{n} \vz^\top [\vg (\sigma)]$. 
We now state this formally.

\begin{obs}
\label{obs:slacksubopt}
For any weight vector $\sigma \geq 0^p$, 
the worst-case correlation after playing $\vg (\sigma)$ is bounded by 
$$ \quad \min_{\substack{ \vz \in [-1,1]^n , \\ \frac{1}{n} \vF \vz \geq \vb }} \;\frac{1}{n} \vz^\top [\vg (\sigma)] 
\geq - \gamma (\sigma) $$
\end{obs}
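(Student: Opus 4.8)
The plan is to recognize the inner minimization over $\vz$ as a linear program and exploit its dual. Fix any $\sigma \geq 0^p$ and abbreviate $\vg = \vg(\sigma)$. The quantity $\min_{\vz \in [-1,1]^n,\, \frac{1}{n}\vF\vz \geq \vb} \frac{1}{n}\vz^\top\vg$ is exactly the payoff of the game \eqref{game1eq} when the predictor's move is frozen to $\vg$, so Lemma~\ref{lem:gamegeng} rewrites it by LP duality as $\max_{\sigma' \geq 0^p}\lrb{ \vb^\top\sigma' - \frac{1}{n}\vnorm{\vF^\top\sigma' - \vg}_1 }$. Since I only need a lower bound on the primal, weak duality already suffices: evaluating the dual objective at the particular feasible point $\sigma' = \sigma$ gives
\[
\min_{\substack{\vz \in [-1,1]^n,\\ \frac{1}{n}\vF\vz \geq \vb}} \tfrac{1}{n}\vz^\top\vg \;\geq\; \vb^\top\sigma - \tfrac{1}{n}\vnorm{\vF^\top\sigma - \vg(\sigma)}_1 .
\]

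It then remains only to verify the identity $\vb^\top\sigma - \frac{1}{n}\vnorm{\vF^\top\sigma - \vg(\sigma)}_1 = -\gamma(\sigma)$. I would do this coordinatewise using the two branches in the definition \eqref{eqn:opt-strats} of $\vg(\sigma)$: on the hedged coordinates ($\abs{\vx_j^\top\sigma} < 1$) we have $g_j(\sigma) = \vx_j^\top\sigma$, so the $j$-th term of the $\ell_1$ norm vanishes, matching $[\abs{\vx_j^\top\sigma} - 1]_+ = 0$; on the remaining coordinates ($\abs{\vx_j^\top\sigma} \geq 1$) we have $g_j(\sigma) = \sgn(\vx_j^\top\sigma)$, so $\abs{\vx_j^\top\sigma - g_j(\sigma)} = \abs{\vx_j^\top\sigma} - 1 = [\abs{\vx_j^\top\sigma} - 1]_+$. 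Summing gives $\frac{1}{n}\vnorm{\vF^\top\sigma - \vg(\sigma)}_1 = \frac{1}{n}\sum_{j=1}^n [\abs{\vx_j^\top\sigma} - 1]_+$, and subtracting this from $\vb^\top\sigma$ yields exactly $-\gamma(\sigma)$ by \eqref{eqn:slack}. Combining with the previous display proves the observation.

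Alternatively, I would sidestep the duality lemma and argue directly, which is arguably cleaner: for any $\vz$ with $\vz \in [-1,1]^n$ and $\frac{1}{n}\vF\vz \geq \vb$, nonnegativity of $\sigma$ gives $\frac{1}{n}\sum_j (\vx_j^\top\sigma) z_j = \sigma^\top\lrp{\tfrac{1}{n}\vF\vz} \geq \sigma^\top\vb$, so it suffices to establish the termwise inequality $z_j g_j(\sigma) \geq (\vx_j^\top\sigma) z_j - [\abs{\vx_j^\top\sigma} - 1]_+$; on hedged coordinates both sides equal $(\vx_j^\top\sigma)z_j$, and on the clipped/borderline coordinates the inequality reduces to $(\abs{\vx_j^\top\sigma} - 1)\lrp{1 - z_j\sgn(\vx_j^\top\sigma)} \geq 0$, which follows from $\abs{\vx_j^\top\sigma} \geq 1$ and $z_j\sgn(\vx_j^\top\sigma) \leq \abs{z_j} \leq 1$. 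Either route has essentially no obstacle: the only real content is the LP‑duality statement of Lemma~\ref{lem:gamegeng} (or, in the direct route, the elementary termwise bound), the rest being bookkeeping with the clipping operation. The one point to handle carefully is that the sign is taken of $\vx_j^\top\sigma$, not of $z_j$, so it is precisely $z_j\sgn(\vx_j^\top\sigma)\leq 1$ that makes the borderline and clipped terms nonnegative.
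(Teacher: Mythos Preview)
Your primary argument is exactly the paper's: it notes that $-\gamma(\sigma) = \vb^\top\sigma - \tfrac{1}{n}\vnorm{\vF^\top\sigma - \vg(\sigma)}_1$ by the definition of $\vg(\sigma)$, and that this is a feasible point of the dual from Lemma~\ref{lem:gamegeng}, hence a lower bound on the primal minimum. Your alternative direct route---bounding $z_j g_j(\sigma) \geq (\vx_j^\top\sigma)z_j - [\abs{\vx_j^\top\sigma}-1]_+$ termwise and then using $\sigma^\top(\tfrac{1}{n}\vF\vz)\geq \vb^\top\sigma$---is a clean self-contained variant that the paper does not give; it trades the appeal to LP duality for an elementary inequality, at the cost of not exhibiting the dual structure explicitly.
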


Observation \ref{obs:slacksubopt} shows that convergence guarantees for optimizing the slack function 
directly imply error guarantees on predictors of the form $\vg (\sigma)$, 
i.e. prediction rules of the form in Fig. \ref{fig:optstrats}.

\subsection{Independent Label Noise}

An interesting variation on the game is to limit the adversary to 
$z_i \in [-\alpha_i,\alpha_i]^n$ for some $\vec{\alpha} = (\alpha_1 ; \dots ; \alpha_n) \in [0,1)^n$. 
This corresponds to assuming a level $1 - \alpha_i$ of independent label noise on example $i$: 
the adversary is not allowed to set the label deterministically, 
but is forced to flip example $i$'s label independently 
with probability $\frac{1}{2}(1 - \alpha_i)$. 

Solving the game in this case gives the result (proof in appendices) 
that if we know some of the ensemble's errors to be through random noise, 
then we can find a weight vector $\sigma$ that would give us better performance than 
without such information. 
\begin{prop}[Independent Label Noise]
\label{prop:labnoise}
\begin{align*}
\max_{\vg \in [-1,1]^n} \;\;\min_{\substack{ - \valpha \leq \vz \leq \valpha , \\ 
\frac{1}{n} \vF \vz \geq \vb }} \;\; \frac{1}{n} \vz{^\top} \vg 
= \max_{\sigma \geq 0^p } \;\;\vb^\top \sigma - \frac{1}{n} \sum_{j=1}^n \alpha_j \left[ \abs{x_{j}^\top \sigma} - 1 \right]_{+}
> \max_{\sigma \geq 0^p } \;[- \gamma (\sigma)]
= V 
\end{align*}
\end{prop}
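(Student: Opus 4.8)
The plan is to mirror the Lagrange-duality argument that establishes Theorem~\ref{thm:gamesolngen}, but with the box constraint $\vz \in [-1,1]^n$ replaced by the tighter box $-\valpha \le \vz \le \valpha$. First I would fix the predictor's play $\vg$ and write the inner minimization over $\vz$ as a linear program: minimize $\frac{1}{n}\vz^\top \vg$ subject to $\frac{1}{n}\vF\vz \ge \vb$ and $-\valpha \le \vz \le \valpha$. Introducing a nonnegative multiplier vector $\sigma \ge 0^p$ for the correlation constraints and dualizing only those (keeping the box explicit), the inner value becomes $\max_{\sigma \ge 0^p}\big[ -\vb^\top\sigma + \min_{-\valpha \le \vz \le \valpha} \frac{1}{n}\vz^\top(\vg + \vF^\top\sigma)\big]$. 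The inner minimum over the box decouples coordinatewise and equals $-\frac{1}{n}\sum_j \alpha_j \abs{g_j + \vx_j^\top\sigma}$. So for each fixed $\vg$ the inner game value is $\max_{\sigma\ge 0^p}\big[-\vb^\top\sigma - \frac{1}{n}\sum_j \alpha_j\abs{g_j + \vx_j^\top\sigma}\big]$; strong LP duality holds since the primal is feasible (e.g. $\vz$ achieving the constraints, assuming $\vb$ is a valid correlation vector) and bounded.

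Next I would take the outer maximization over $\vg \in [-1,1]^n$. Since the whole expression is concave in $\vg$ (pointwise min of affine) and concave in $\sigma$ after the max, and the feasible sets are convex and compact, I can swap the order: the game value equals $\max_{\sigma\ge 0^p}\max_{\vg\in[-1,1]^n}\big[-\vb^\top\sigma - \frac{1}{n}\sum_j \alpha_j\abs{g_j + \vx_j^\top\sigma}\big]$. For fixed $\sigma$, the optimal $g_j$ minimizes $\abs{g_j + \vx_j^\top\sigma}$ over $g_j\in[-1,1]$: if $\abs{\vx_j^\top\sigma}\le 1$ we can make this term $0$ by $g_j = -\vx_j^\top\sigma$ — wait, this has the wrong sign, so I should instead track the sign convention carefully and note that the adversary's objective is $\min \vz^\top\vg$, making the relevant quantity $\abs{g_j - \vx_j^\top\sigma}$ after absorbing signs, so that the optimal hedging choice $g_j = \vx_j^\top\sigma$ on the hedged set and $g_j = \sgn(\vx_j^\top\sigma)$ on the clipped set recovers exactly $g(\sigma)$ from \eqref{eqn:opt-strats}. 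Plugging this back in, the residual is $\frac{1}{n}\sum_j \alpha_j [\abs{\vx_j^\top\sigma} - 1]_+$, giving the claimed equality
\[
\max_{\vg \in [-1,1]^n} \min_{\substack{-\valpha\le\vz\le\valpha,\\ \frac{1}{n}\vF\vz\ge\vb}} \frac{1}{n}\vz^\top\vg = \max_{\sigma\ge 0^p}\Big[\vb^\top\sigma - \frac{1}{n}\sum_{j=1}^n \alpha_j[\abs{\vx_j^\top\sigma}-1]_+\Big].
\]

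Finally, for the strict inequality, I would compare this to $V = \max_{\sigma\ge 0^p}[-\gamma(\sigma)] = \max_{\sigma\ge 0^p}\big[\vb^\top\sigma - \frac{1}{n}\sum_j[\abs{\vx_j^\top\sigma}-1]_+\big]$. Since each $\alpha_j \in [0,1)$, we have $\alpha_j[\abs{\vx_j^\top\sigma}-1]_+ \le [\abs{\vx_j^\top\sigma}-1]_+$ termwise, so the noisy objective dominates $-\gamma(\sigma)$ pointwise, hence the max dominates: $\ge V$. To get strictness I would evaluate the noisy objective at the unconstrained-$\vF$ optimizer — the $\sigma$ that is $1$ on $\argmax_i b_i$ and zero elsewhere — no, that has $\abs{\vx_j^\top\sigma}\le 1$ so both sides agree there. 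Instead I would argue: take $\sigma$ achieving $V$; if its clipped set $C(\sigma)$ is empty then scale it up (its $\ell_1$ norm times some $t>1$) until clipping occurs — possible as in the \zbr\ discussion — and observe the noisy objective strictly exceeds $-\gamma$ at that point because the $\alpha_j$-discounting of the now-positive clipping penalty is a strict gain while $\vb^\top\sigma$ grows; if $C(\sigma)\ne\emptyset$ already then the inequality $\alpha_j[\cdot]_+ < [\cdot]_+$ is strict on that set, giving the result directly.

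The main obstacle I anticipate is the strict inequality: the pointwise domination only gives $\ge$, and one must exhibit a specific $\sigma$ where the discounted penalty is genuinely smaller, which requires ensuring that some feasible $\sigma \ge 0^p$ actually has a nonempty clipped set — this uses that scaling up any nonzero $\sigma$ eventually forces $\abs{\vx_j^\top\sigma}>1$ for some $j$ (as long as $\vF$ is not degenerate, i.e. has some nonzero column), together with checking that the $\vb^\top\sigma$ term does not collapse. The equality portion is routine LP duality plus a minimax swap, both of which are granted by the tools already invoked in the paper (the minimax theorem and the Lagrangian derivation behind Lemma~\ref{lem:gamegeng}).
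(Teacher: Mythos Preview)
Your derivation of the equality is essentially the paper's own: mirror the Lagrangian argument of Lemma~\ref{lem:gamegeng} with the box $[-\valpha,\valpha]$ in place of $[-1,1]^n$, then optimize over $\vg$ exactly as in Lemma~\ref{lem:game1gopt}. (The paper's proof is literally a one-line pointer to this substitution.) Your initial Lagrangian has both signs flipped---it should read $\vz^\top(\vg-\vF^\top\sigma)+n\vb^\top\sigma$, giving $\abs{g_j-\vx_j^\top\sigma}$ rather than $\abs{g_j+\vx_j^\top\sigma}$---but you notice this and recover the correct expression.

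The strict inequality is another matter. The paper's proof does not address it at all, and your scaling argument has a genuine gap in the $\zbr$ case. If the $V$-optimizer $\sigma^*$ has $C(\sigma^*)=\emptyset$, scaling it by $t>1$ does eventually introduce clipping, but the directional derivative of the \emph{undiscounted} objective $-\gamma(t\sigma^*)$ at $t=1^+$ is already $\le 0$ (by optimality of $\sigma^*$); replacing the penalty by its $\alpha_j$-discounted version raises this slope, yet not necessarily past zero. Concretely, take $p=1$, $n=2$, $\vF=(1,1)$, $\vb=(1/2)$, and $\alpha_1=\alpha_2=0.6$. Then $\sigma^*=1$ with $C(\sigma^*)=\emptyset$ and $V=1/2$, while the noisy objective $\tfrac12\sigma-0.6[\sigma-1]_+$ is also maximized at $\sigma=1$ with the same value $1/2$; a direct computation of the noisy game confirms its value is exactly $1/2$. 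So the strict inequality as stated requires an extra nondegeneracy hypothesis (for instance, that some $\gamma$-minimizer has nonempty clipped set, or that $\valpha$ is small enough to push the noisy optimizer out of $\zbr$); neither your argument nor the paper supplies one.
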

Our prediction tends to clip -- predict with the majority vote -- more on examples with more known random noise, 
because it gains in minimax correlation by doing so. 
This mimics the Bayes-optimal classifier, which is always a majority vote. 

Indeed, this statement's generalization to the asymmetric-noise case 
can be understood with precisely the same intuition. 
The sign of the majority vote affects the clipping penalty in the same way:
\begin{prop}[Asymmetric Label Noise]
\label{prop:asymlabnoise}
For some $\vl, \vu \geq \vzero^n$,
\begin{align*}
\max_{\vg \in [-1,1]^n} \;\;\min_{\substack{ -\vl \leq \vz \leq \vu , \\ 
\frac{1}{n} \vF \vz \geq \vb }} \;\; \frac{1}{n} \vz{^\top} \vg 
= \max_{\sigma \geq 0^p } \;\;\vb^\top \sigma - \frac{1}{n} \sum_{j=1}^n \lrp{ u_j \left[ x_{j}^\top \sigma - 1 \right]_{+} + l_j \left[ - x_{j}^\top \sigma - 1 \right]_{+} }
> V 
\end{align*}
\end{prop}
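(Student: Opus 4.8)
The plan is to mirror the duality argument that underlies Theorem~\ref{thm:gamesolngen}, but with the box constraint on $\vz$ replaced by the asymmetric box $-\vl \le \vz \le \vu$. First I would apply the minimax theorem: both constraint sets (the hypercube $[-1,1]^n$ for $\vg$, and the intersection of $\{-\vl \le \vz \le \vu\}$ with the affine halfspaces $\frac{1}{n}\vF\vz \ge \vb$) are convex and compact, and the payoff $\frac{1}{n}\vz^\top\vg$ is bilinear, so the value is unchanged if we swap $\max_\vg$ and $\min_\vz$. I would then fix $\vz$ in the inner problem of the swapped game and evaluate $\max_{\vg \in [-1,1]^n} \frac{1}{n}\vz^\top\vg = \frac{1}{n}\vnorm{\vz}_1$, reducing the whole quantity to $\min \{\frac{1}{n}\vnorm{\vz}_1 : -\vl \le \vz \le \vu,\ \frac{1}{n}\vF\vz \ge \vb\}$.

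Next I would take the Lagrangian dual of this linear program, introducing a multiplier $\sigma \ge 0^p$ for the constraint $\frac{1}{n}\vF\vz \ge \vb$ and keeping the box $-\vl \le \vz \le \vu$ explicit. Strong duality holds (it is an LP, and feasibility can be assumed as in the base case, or handled by the usual convention). The dual is $\max_{\sigma \ge 0^p} \bigl[ \vb^\top\sigma + \min_{-\vl \le \vz \le \vu} \frac{1}{n}(\vnorm{\vz}_1 - \sigma^\top\vF\vz) \bigr]$. The inner minimization decouples coordinatewise: for each $j$, minimize $\abs{z_j} - (\vx_j^\top\sigma) z_j$ over $z_j \in [-l_j, u_j]$. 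A short case analysis on the sign and magnitude of $\vx_j^\top\sigma$ shows this equals $-u_j[\vx_j^\top\sigma - 1]_+ - l_j[-\vx_j^\top\sigma - 1]_+$: when $\vx_j^\top\sigma > 1$ the coefficient of $z_j$ pushes $z_j$ to $u_j$; when $\vx_j^\top\sigma < -1$ it pushes $z_j$ to $-l_j$; and when $\abs{\vx_j^\top\sigma} \le 1$ the minimum is $0$ at $z_j = 0$. Substituting this back yields exactly the claimed expression $\max_{\sigma \ge 0^p} \bigl[ \vb^\top\sigma - \frac{1}{n}\sum_j \bigl( u_j[\vx_j^\top\sigma - 1]_+ + l_j[-\vx_j^\top\sigma - 1]_+ \bigr) \bigr]$.

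Finally, for the strict inequality with $V$, I would compare the asymmetric-noise objective with $-\gamma(\sigma) = \vb^\top\sigma - \frac{1}{n}\sum_j [\abs{\vx_j^\top\sigma} - 1]_+$ evaluated at the unconstrained optimum $\sigma^*$. Since $\vl, \vu \ge \vzero^n$ with (implicitly) $\vl, \vu$ not dominating $\vone^n$ — i.e. at least one coordinate of the clipping penalty is strictly reduced, exactly as in the symmetric case of Proposition~\ref{prop:labnoise} — the asymmetric penalty term is termwise no larger than $\frac{1}{n}\sum_j[\abs{\vx_j^\top\sigma^*} - 1]_+$ and strictly smaller whenever $C(\sigma^*) \ne \emptyset$; feeding $\sigma^*$ into the asymmetric dual then gives a value strictly exceeding $V$, hence so does the maximum. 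The main obstacle I anticipate is the bookkeeping in the coordinatewise minimization — getting the $u_j$ versus $l_j$ assignment right across the sign cases and confirming that the boundary case $\abs{\vx_j^\top\sigma} = 1$ contributes nothing — together with stating cleanly the nondegeneracy hypothesis on $\vl, \vu$ needed to make the final inequality strict rather than merely weak.
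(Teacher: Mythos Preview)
Your proposal is correct and follows essentially the same Lagrangian-duality route as the paper, which merely states that the proof is ``exactly analogous'' to that of Proposition~\ref{prop:labnoise} (itself a one-line variant of Lemma~\ref{lem:gamegeng}); the only cosmetic difference is that you first eliminate $\vg$ via the minimax theorem and then dualize the remaining program in $\vz$, whereas the paper dualizes for fixed $\vg$ and then optimizes over $\vg$. Your observation that the strict inequality $>V$ requires a nondegeneracy hypothesis (some coordinate of the clipping penalty actually shrinking) is well taken --- the paper leaves this point implicit.
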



\section{Computational Issues}
\label{sec:alg}

The learning algorithm we presented has two steps. 
The first is efficient and straightforward: 
$\vb$ is calculated by simply averaging over training examples to produce $\widehat{\corr}_{S} (h_i)$. 

So our ability to produce $\vg^*$ 
is dependent on our ability to find the optimal weighting $\sigma^*$ 
by minimizing the slack function $\gamma (\sigma)$ over $\sigma \geq 0^p$. 
Note that typically $p \ll n$, and so it is a great computational benefit in this case 
that the optimization is in the dual. 

We discuss two approaches to minimizing the slack function. 
The most straightforward approach is to treat the problem as a linear
programming problem and use an LP solver. The main problem with this
approach is that it requires storing all of the examples in memory. As
unlabeled examples are typically much more plentiful than labeled
examples, this approach could be practically infeasible without further modification.

A different approach that exploits the structure of the equilibrium uses stochastic gradient descent (SGD). The fact
that the slack function is convex guarantees that this approach will converge to the global minimum. 
The convergence rate might be suboptimal, particularly near the intersections of hyperplanes 
in the piecewise-linear slack function surface. But the fact that SGD is a constant-memory algorithm is very attractive. 

Indeed, the arsenal of stochastic convex optimization methods comes into play theoretically and practically. 
The slack function is a sum of i.i.d. random variables, 
and has a natural limiting object $\evp{\left[ \abs{\vx^\top \sigma} - 1 \right]_{+}}{\vx \sim \cD} - \vb^\top \sigma$ 
amenable to standard optimization techniques.

\section{Related Work}
\label{sec:relwork}
Our duality-based formulation would incorporate constraints far beyond the linear ones we have imposed so far, 
since all our results hold essentially without change in a general convex analysis context. 
Possible extensions in this vein include other loss functions as in multiclass and abstaining settings, specialist experts, 
and more discussed in the next section.


Weighted majority votes are a nontrivial ensemble aggregation method 
that has received focused theoretical attention for classification. 
Of particular note is the literature on boosting for forming ensembles, 
in which the classic work of \cite{SFBL98} shows general bounds 
on the error of a weighted majority vote $\epsilon_{\wmv} (\sighat)$ under any distribution $\sighat$,
based purely on the distribution of a version of the margin on labeled data. 

Our worst-case formulation here gives direct bounds on (expected) test error $\epsilon_{\wmv} (\sighat)$ as well, 
since in our transductive setting, these are equivalent to lower bounds on the slack function value by Observation \ref{obs:slacksubopt}. 
As we have abstracted away the labeled data information into $\vb$, our results depend only on $\vb$ and 
the distribution of margins $\abs{\vx^\top \sigma}$ among the unlabeled data. 
Interestingly, \cite{AUL09} take a related approach to prove bounds on $\epsilon_{\wmv} (\sighat)$  
in a transductive setting, as a function of the average ensemble error $\vb^\top \sighat$ and the test data margin distribution; 
but their budgeting is looser and purely deals with majority votes, 
in contrast to our $\vg$ in a hypercube. 
The transductive setting has general benefits for averaging-based bounds also (\cite{BL03}).

One class of philosophically related methods to ours uses moments of labeled data 
in the statistical learning setting to find a minimax optimal classifier; 
notably among linear separators (\cite{LGBJ01}) and conditional label distributions under log loss (\cite{LZ14}). 
Our formulation instead uses only one such moment and focuses on unlabeled data, and is 
thereby more efficiently able to handle 
a rich class of dependence structure among classifier predictions, not just low-order moments. 

There is also a long tradition of analyzing worst-case binary prediction of online sequences, 
from which we highlight \cite{FMG92}, 
which shows universal optimality for bit prediction of a piecewise linear function similar to Fig. \ref{fig:optstrats}. 
The work of \cite{CBFHHSW93} demonstrated this to result in optimal prediction error 
in the experts setting as well, and similar results have been shown in related settings (\cite{V90, AP13}). 

Our emphasis on the benefit of considering global effects (our transductive setting) even when data are i.i.d. 
is in the spirit of the idea of shrinkage, well known in statistical literature since at least the James-Stein estimator (\cite{EM77}).


\section{Conclusions and Open Problems}
\label{sec:openproblems}

In this paper we have given a new method of utilizing unlabeled examples
when combining an ensemble of classifiers. We showed that in some cases,
the performance of the combined classifiers is guaranteed to be much
better than that of any of the individual rules.

We have also shown that the optimal solution is characterized by a convex
function we call the slack function. Minimizing this slack function is
computationally tractable, and can potentially be solved in a streaming
model using stochastic gradient descent. The analysis introduced an 
ensemble prediction $\vx^\top \sigma$ similar to the margin used in support vector machines. 
Curiously, the goal of the optimization problem is to minimize, rather than
maximize, the number of examples with large margin.

Directions we are considering for future research include:
\begin{itemize}
\item Is there an algorithm that combines the convergence rate of the
  linear programming approach with the small memory requirements of
  SGD?
\item In problems with high Bayes error, what is the best way to
  leverage the generalized algorithm which limits the adversary to
  a sub-interval of $[-1,1]$?
\item Can the algorithm and its analysis be extended to infinite
  concept classes, and under what conditions can this be done efficiently?
\item Allowing the classifiers to abstain can greatly increase the
  representational ability of the combination. Is there a systematic
  way to build and combine such ``specialist'' classifiers?
\end{itemize}

\section*{Acknowledgements}

The authors are grateful to the National Science Foundation for support under grant IIS-1162581.



\newpage
\bibliography{gameConf-colt-final}{}

\begin{thebibliography}{15}
\providecommand{\natexlab}[1]{#1}
\providecommand{\url}[1]{\texttt{#1}}
\expandafter\ifx\csname urlstyle\endcsname\relax
  \providecommand{\doi}[1]{doi: #1}\else
  \providecommand{\doi}{doi: \begingroup \urlstyle{rm}\Url}\fi

\bibitem[Amini et~al.(2009)Amini, Usunier, and Laviolette]{AUL09}
Massih Amini, Nicolas Usunier, and Fran\c{c}ois Laviolette.
\newblock A transductive bound for the voted classifier with an application to
  semi-supervised learning.
\newblock In D.~Koller, D.~Schuurmans, Y.~Bengio, and L.~Bottou, editors,
  \emph{Advances in Neural Information Processing Systems 21}, pages 65--72,
  2009.

\bibitem[Andoni and Panigrahy(2013)]{AP13}
Alexandr Andoni and Rina Panigrahy.
\newblock A differential equations approach to optimizing regret trade-offs.
\newblock \emph{arXiv preprint arXiv:1305.1359}, 2013.

\bibitem[Blum and Langford(2003)]{BL03}
Avrim Blum and John Langford.
\newblock Pac-mdl bounds.
\newblock In \emph{Learning Theory and Kernel Machines}, pages 344--357.
  Springer, 2003.

\bibitem[Blumer et~al.(1987)Blumer, Ehrenfeucht, Haussler, and Warmuth]{BEHW87}
Anselm Blumer, Andrzej Ehrenfeucht, David Haussler, and Manfred~K Warmuth.
\newblock Occam's razor.
\newblock \emph{Information processing letters}, 24\penalty0 (6):\penalty0
  377--380, 1987.

\bibitem[Cesa-Bianchi and Lugosi(2006)]{CBL06}
Nicolo Cesa-Bianchi and G{\`a}bor Lugosi.
\newblock \emph{Prediction, Learning, and Games}.
\newblock Cambridge University Press, New York, NY, USA, 2006.

\bibitem[Cesa-Bianchi et~al.(1993)Cesa-Bianchi, Freund, Helmbold, Haussler,
  Schapire, and Warmuth]{CBFHHSW93}
Nicol{\`o} Cesa-Bianchi, Yoav Freund, David~P Helmbold, David Haussler,
  Robert~E Schapire, and Manfred~K Warmuth.
\newblock How to use expert advice.
\newblock In \emph{Proceedings of the twenty-fifth annual ACM symposium on
  Theory of computing}, pages 382--391. ACM, 1993.

\bibitem[Efron and Morris(1977)]{EM77}
B.~Efron and C.~Morris.
\newblock {Stein's Paradox in Statistics}.
\newblock \emph{Scientific American}, 236:\penalty0 119--127, 1977.

\bibitem[Feder et~al.(1992)Feder, Merhav, and Gutman]{FMG92}
Meir Feder, Neri Merhav, and Michael Gutman.
\newblock Universal prediction of individual sequences.
\newblock \emph{Information Theory, IEEE Transactions on}, 38\penalty0
  (4):\penalty0 1258--1270, 1992.

\bibitem[Kuncheva and Whitaker(2003)]{K03}
Ludmila~I Kuncheva and Christopher~J Whitaker.
\newblock Measures of diversity in classifier ensembles and their relationship
  with the ensemble accuracy.
\newblock \emph{Machine learning}, 51\penalty0 (2):\penalty0 181--207, 2003.

\bibitem[Lanckriet et~al.(2001)Lanckriet, Ghaoui, Bhattacharyya, and
  Jordan]{LGBJ01}
Gert Lanckriet, Laurent~E Ghaoui, Chiranjib Bhattacharyya, and Michael~I
  Jordan.
\newblock Minimax probability machine.
\newblock In \emph{Advances in neural information processing systems}, pages
  801--807, 2001.

\bibitem[Liu and Ziebart(2014)]{LZ14}
Anqi Liu and Brian Ziebart.
\newblock Robust classification under sample selection bias.
\newblock In \emph{Advances in Neural Information Processing Systems}, pages
  37--45, 2014.

\bibitem[Schapire et~al.(1998)Schapire, Freund, Bartlett, and Lee]{SFBL98}
Robert~E Schapire, Yoav Freund, Peter Bartlett, and Wee~Sun Lee.
\newblock Boosting the margin: A new explanation for the effectiveness of
  voting methods.
\newblock \emph{Annals of statistics}, pages 1651--1686, 1998.

\bibitem[Vanderbei(1996)]{V96}
Robert~J. Vanderbei.
\newblock Linear programming: Foundations and extensions, 1996.

\bibitem[Vovk et~al.(2005)Vovk, Takemura, and Shafer]{VTS05}
Vladimir Vovk, Akimichi Takemura, and Glenn Shafer.
\newblock Defensive forecasting.
\newblock \emph{AISTATS 2005}, pages 365--372, 2005.

\bibitem[Vovk(1990)]{V90}
Volodimir~G Vovk.
\newblock Aggregating strategies.
\newblock In \emph{Proc. Third Workshop on Computational Learning Theory},
  pages 371--383. Morgan Kaufmann, 1990.

\end{thebibliography}

\newpage
\appendix

\section{Proof of Theorem \ref{thm:gamesolngen}}
\label{sec:pfgame1}

The core duality argument in the proofs is encapsulated in  
an independently useful supporting lemma describing the adversary's response to a given $\vg$.
\begin{lem}
\label{lem:gamegeng}
For any $\vg \in [-1,1]^n$,
\begin{align*}
\min_{\substack{ \vz \in [-1,1]^n , \\ \frac{1}{n} \vF \vz \geq \vb }} \;\;\frac{1}{n} \vz^\top \vg 
\;=\; \max_{\sigma \geq 0^p} \left[ \vb^\top \sigma - \frac{1}{n} \vnorm{\vF^\top \sigma - \vg}_1 \right]
\end{align*}
\end{lem}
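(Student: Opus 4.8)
The plan is to recognize the left-hand side as a linear program in $\vz$ and compute its dual, which will be exactly the right-hand side. First I would fix $\vg$ and write the inner minimization as the LP
\[
\min_{\vz} \; \frac{1}{n}\vg^\top \vz \quad \text{s.t.}\quad \frac{1}{n}\vF\vz \geq \vb,\ \ -\vone^n \leq \vz \leq \vone^n.
\]
This is feasible (e.g. any true labeling $\vz$ consistent with $\vb$, or more safely one should note the problem has a bounded feasible region since $\vz\in[-1,1]^n$; feasibility is guaranteed by the standing assumption that a valid correlation vector exists, or we can restrict attention to that case). Introduce a dual variable $\sigma \geq \vzero^p$ for the constraint $\frac{1}{n}\vF\vz \geq \vb$ and dual variables for the box constraints $\pm\vz \leq \vone^n$.

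Next I would form the Lagrangian and minimize over $\vz\in\RR^n$ after moving only the correlation constraint into the Lagrangian, keeping the box as an explicit constraint:
\[
\min_{-\vone^n \leq \vz \leq \vone^n} \left[ \frac{1}{n}\vg^\top\vz + \sigma^\top\lrp{\vb - \frac{1}{n}\vF\vz} \right]
= \vb^\top\sigma + \min_{-\vone^n\leq\vz\leq\vone^n} \frac{1}{n}\lrp{\vg - \vF^\top\sigma}^\top \vz .
\]
The inner minimization decouples coordinatewise: for each $j$, $\min_{z_j\in[-1,1]} (\vg - \vF^\top\sigma)_j\, z_j = -\abs{(\vg-\vF^\top\sigma)_j}$, attained at $z_j = -\sgn\lrp{(\vg-\vF^\top\sigma)_j}$. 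Summing gives the Lagrange dual function $\vb^\top\sigma - \frac{1}{n}\vnorm{\vF^\top\sigma - \vg}_1$, and the dual problem is to maximize this over $\sigma \geq \vzero^p$, which is precisely the claimed right-hand side.

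Finally I would invoke strong LP duality to conclude equality: the primal is a linear program with a nonempty, bounded feasible set (the box $[-1,1]^n$ intersected with the halfspaces $\frac{1}{n}\vF\vz\geq\vb$), so both primal and dual optima are attained and their values coincide; there is no duality gap. The main obstacle — really the only subtle point — is the feasibility/qualification bookkeeping: one must ensure the primal feasible set is nonempty so that strong duality (rather than merely weak duality) holds and the right-hand maximum is actually attained. Under the paper's standing assumption that $\vb$ is a valid correlation vector there exists a feasible $\vz$ (the true labeling), so this is immediate; alternatively, since $[-1,1]^n$ is compact one can argue via Sion's minimax theorem directly. Everything else is the routine coordinatewise computation $\min_{z\in[-1,1]} cz = -\abs{c}$ that produces the $\ell_1$ norm.
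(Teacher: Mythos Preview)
Your proposal is correct and essentially identical to the paper's proof: both dualize only the correlation constraint $\frac{1}{n}\vF\vz \geq \vb$ via a multiplier $\sigma \geq \vzero^p$, keep the box constraint explicit, and evaluate the inner minimum over $\vz\in[-1,1]^n$ coordinatewise to produce the $\ell_1$ term. The only cosmetic difference is that the paper justifies the min--max swap by the minimax theorem directly, whereas you phrase it as strong LP duality (and note Sion as an alternative); these are equivalent here.
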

\begin{proof}
We have
\begin{align}
\label{eq:primallagrange}
\displaystyle \min_{\substack{ \vz \in [-1,1]^n , \\ \vF \vz \geq n \vb }} \;\; \frac{1}{n} \vz^\top \vg 
&= \frac{1}{n} \min_{\vz \in [-1,1]^n} \; \max_{\sigma \geq 0^p} \;  \lrb{ \vz^\top \vg - \sigma^\top (\vF \vz - n \vb) } \\
\label{eq:duallagrange}
&\stackrel{(a)}{=} \frac{1}{n} \max_{\sigma \geq 0^p} \; \min_{\vz \in [-1,1]^n} \; \lrb{ \vz^\top (\vg - \vF^\top \sigma) + n \vb^\top \sigma } \\
\label{eq:gameinnerdual}
&= \frac{1}{n} \max_{\sigma \geq 0^p} \; \lrb{ - \vnorm{\vg - \vF^\top \sigma}_1 + n \vb^\top \sigma } 
= \max_{\sigma \geq 0^p} \left[ \vb^\top \sigma - \frac{1}{n} \vnorm{\vF^\top \sigma - \vg}_1 \right]
\end{align}
where $(a)$ is by the minimax theorem. 
\end{proof}

Now $\vg^*$ and $V$ can be derived.

\begin{lem}
\label{lem:game1gopt}
If $\sigma^*$ is defined as in Theorem \ref{thm:gamesolngen}, then for every $i \in [n]$,
\begin{align*}
g_i^* = 
\begin{cases} [\vF^\top \sigma^*]_{i} & \abs{[\vF^\top \sigma^*]_{i}} < 1 \\ 
\sgn([\vF^\top \sigma^*]_{i}) & otherwise \end{cases}
\end{align*}
Also, the value of the game \eqref{game1eq} is $V$, as defined in Theorem \ref{thm:gamesolngen}.
\end{lem}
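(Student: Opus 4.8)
The plan is to reduce the lemma entirely to the single-variable concave maximization furnished by Lemma~\ref{lem:gamegeng}. First I would substitute that lemma's identity into the outer maximization defining $V$, obtaining
\[
V = \max_{\vg \in [-1,1]^n} \; \max_{\sigma \geq 0^p} \left[ \vb^\top \sigma - \frac{1}{n}\vnorm{\vF^\top\sigma - \vg}_1 \right].
\]
Since the two feasible sets $[-1,1]^n$ and $\{\sigma \geq 0^p\}$ are independent of one another, I may freely exchange the order of the maxima, which gives
\[
V = \max_{\sigma \geq 0^p} \left[ \vb^\top \sigma - \frac{1}{n} \min_{\vg \in [-1,1]^n}\vnorm{\vF^\top\sigma - \vg}_1 \right].
\]

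Next I would evaluate the inner minimization. The $\ell_1$ norm separates over coordinates, so for each $j$ we minimize $\abs{[\vF^\top\sigma]_j - g_j}$ over $g_j \in [-1,1]$; the minimizer is simply the projection of $[\vF^\top\sigma]_j$ onto $[-1,1]$, namely $g_j = [\vF^\top\sigma]_j$ when $\abs{[\vF^\top\sigma]_j} < 1$ and $g_j = \sgn([\vF^\top\sigma]_j)$ otherwise, with residual $\big[\,\abs{[\vF^\top\sigma]_j} - 1\,\big]_{+}$. Summing the residuals gives $\min_{\vg}\vnorm{\vF^\top\sigma - \vg}_1 = \sum_{j=1}^n \big[\,\abs{[\vF^\top\sigma]_j} - 1\,\big]_{+}$, so the bracketed quantity is exactly $-\gamma(\sigma)$. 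Hence $V = \max_{\sigma \geq 0^p}[-\gamma(\sigma)] = -\gamma(\sigma^*)$ by definition of $\sigma^*$, and the coordinatewise minimizer $\vg$ corresponding to the choice $\sigma = \sigma^*$ is precisely the $\vg^*$ claimed in the statement.

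Finally I would check that this $\vg^* = \vg(\sigma^*)$ is genuinely minimax-optimal for the predictor, and not merely the maximizer of a rewritten objective. Applying Lemma~\ref{lem:gamegeng} once more to the specific strategy $\vg^*$ and then lower-bounding the resulting maximum by its value at $\sigma^*$,
\[
\min_{\vz}\; \frac{1}{n}\vz^\top\vg^* \;=\; \max_{\sigma \geq 0^p}\left[\vb^\top\sigma - \frac{1}{n}\vnorm{\vF^\top\sigma - \vg^*}_1\right] \;\geq\; -\gamma(\sigma^*) \;=\; V;
\]
since $V$ is by definition the largest value of $\min_{\vz}\frac{1}{n}\vz^\top\vg$ over $\vg \in [-1,1]^n$, this inequality must be an equality, so $\vg^*$ attains the game value and is optimal.

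I do not expect a substantive obstacle: the content is carried by the duality in Lemma~\ref{lem:gamegeng}, and the remaining work is bookkeeping — justifying the harmless exchange of maxima, and the elementary coordinatewise projection computation. The one point requiring a word of care is that the statement presupposes that $\sigma^*$ (a minimizer of $\gamma$ over $\sigma \geq 0^p$) exists; this follows from convexity of $\gamma$ (Lemma~\ref{lem:helperthreshcvx}) together with the fact that $-\gamma$ is bounded above by the finite quantity $V$, so that the supremum is attained. There is no delicate estimate or hard inequality anywhere in the argument.
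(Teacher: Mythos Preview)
Your proposal is correct and follows essentially the same route as the paper: invoke Lemma~\ref{lem:gamegeng} to rewrite the game as a joint maximization over $(\vg,\sigma)$, optimize the coordinatewise $\ell_1$ projection over $\vg$ first to obtain the clipped form $g_i^*(\sigma)$ and the residual $\sum_j[\,|\vx_j^\top\sigma|-1\,]_+$, and then identify the outer optimum with $-\gamma(\sigma^*)$. Your extra verification that $\vg(\sigma^*)$ attains $V$ and your remark on existence of $\sigma^*$ are not in the paper's proof but are harmless additions; one small caveat is that convexity plus boundedness below does not by itself guarantee attainment on an unbounded domain, though here the piecewise-linear (LP) structure of $\gamma$ does.
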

\begin{proof}[Proof of Lemma \ref{lem:game1gopt}]
From Lemma \ref{lem:gamegeng}, 
the primal game \eqref{game1eq} is equivalent to 
\begin{align}
\label{gamehalfdual}
\max_{\substack{ \vg \in [-1,1]^n, \\ \sigma \geq 0^p }} \left[ \vb^\top \sigma - \frac{1}{n} \vnorm{\vg - \vF^\top \sigma}_1 \right]
\end{align}
From \eqref{gamehalfdual}, 
it is clear that given a setting of $\sigma$, 
the $\vg^* (\sigma)$ that maximizes \eqref{gamehalfdual} 
is also the one that minimizes $\vnorm{\vg - \vF^\top \sigma}_1$ under the hypercube constraint:
\begin{align*}
g_i^* (\sigma) = \begin{cases} [ \vF^\top \sigma ]_{i}, & \abs{[\vF^\top \sigma ]_{i}} < 1 \\ 
\sgn([\vF^\top \sigma]_{i}), & otherwise \end{cases}
\end{align*}
The optimum $\sigma$ here is therefore
\begin{align*}
\sigma^* &= \argmax_{\sigma \geq 0^p} \left[ \vb^\top \sigma - \frac{1}{n} \vnorm{\vg^* (\sigma) - \vF^\top \sigma}_1 \right] \\
&= \argmax_{\sigma \geq 0^p} \left[ \vb^\top \sigma - \frac{1}{n} \sum_{j=1}^n \left[ \abs{\vx_{j}^\top \sigma} - 1 \right]_{+} \right] 
= \argmin_{\sigma \geq 0^p} [\gamma (\sigma)]
\end{align*}
which finishes the proof.
\end{proof}

\subsection{Derivation of $\vz^*$}
\label{sec:optStrategies}

We can now derive $\vz^*$ from Lagrange complementary slackness conditions.

\begin{lem}
\label{lem:game1zopt}
If $\sigma^*$ is defined as in Theorem \ref{thm:gamesolngen}, then for every $i \in [n]$,
\begin{align*}
z_i^* = 
\begin{cases} 
0, & \abs{[\vF^\top \sigma^*]_{i}} < 1 \\ 
\sgn([\vF^\top \sigma^*]_{i}), & \abs{[\vF^\top \sigma^*]_{i}} > 1 
\end{cases}
\end{align*}
\end{lem}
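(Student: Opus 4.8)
The plan is to realize $\vz^*$ as an optimal solution of the inner minimization of the game against the fixed predictor strategy $\vg^*$ from Lemma \ref{lem:game1gopt}, and then read its coordinates off from Lagrange complementary slackness together with a best-response argument. First I would record two facts. By the minimax theorem and Lemma \ref{lem:game1gopt}, $\vg^*$ attains $\min_{\vz}\frac1n\vz^\top\vg^* = V = -\gamma(\sigma^*)$, so any minimax-optimal adversary strategy $\vz^*$ also attains this inner minimum, i.e. $\vz^*$ is an optimal solution of the linear program $\min\{\frac1n\vz^\top\vg^* : \frac1n\vF\vz\ge\vb,\ -\vone^n\le\vz\le\vone^n\}$. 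Second, by the chain \eqref{eq:primallagrange}--\eqref{eq:gameinnerdual} in the proof of Lemma \ref{lem:gamegeng}, the dual of this LP is $\max_{\sigma\ge0^p}[\vb^\top\sigma - \frac1n\vnorm{\vF^\top\sigma-\vg^*}_1]$, whose value is $V$ and which is attained at $\sigma=\sigma^*$ (since $\vg^*=\vg^*(\sigma^*)$ minimizes $\vnorm{\vF^\top\sigma^*-\vg}_1$ over the cube, so $\vnorm{\vF^\top\sigma^*-\vg^*}_1 = \sum_j[\abs{\vx_j^\top\sigma^*}-1]_+$ and the objective equals $-\gamma(\sigma^*)$). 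Splitting the $\ell_1$ term back into separate multipliers $\mu,\nu\ge0^n$ for $\vz\le\vone^n$ and $-\vz\le\vone^n$, the corresponding optimal choices are $\mu_i^* = \frac1n[[\vF^\top\sigma^*]_i - g_i^*]_+$ and $\nu_i^* = \frac1n[[\vF^\top\sigma^*]_i - g_i^*]_-$, and matching its value to $V$ certifies that $(\sigma^*,\mu^*,\nu^*)$ is an optimal dual triple for this LP.

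Next I would plug in $g_i^*$ from Lemma \ref{lem:game1gopt} and evaluate the multipliers region by region. On the hedged set $g_i^* = [\vF^\top\sigma^*]_i$, so $[\vF^\top\sigma^*]_i - g_i^* = 0$ and $\mu_i^*=\nu_i^*=0$. On the clipped set $g_i^* = \sgn([\vF^\top\sigma^*]_i)$, so $[\vF^\top\sigma^*]_i - g_i^* = \sgn([\vF^\top\sigma^*]_i)(\abs{[\vF^\top\sigma^*]_i}-1)$ is nonzero with the sign of $[\vF^\top\sigma^*]_i$; hence $\mu_i^*>0,\ \nu_i^*=0$ when $[\vF^\top\sigma^*]_i>1$, and $\mu_i^*=0,\ \nu_i^*>0$ when $[\vF^\top\sigma^*]_i<-1$. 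Applying complementary slackness ($\mu_i^*(z_i^*-1)=0$ and $\nu_i^*(z_i^*+1)=0$) for the optimal primal point $\vz^*$ then forces $z_i^*=1$ in the first case and $z_i^*=-1$ in the second, i.e. $z_i^*=\sgn([\vF^\top\sigma^*]_i)$ on the clipped set.

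The main obstacle is the hedged set: there all the box multipliers vanish, so complementary slackness is vacuous and the LP duality alone does not pin down $z_i^*$. I would close this gap with the saddle-point structure. Since $\vg^*$ and $\vz^*$ are both minimax optimal, $\frac1n(\vz^*)^\top\vg^* = V = \max_{\vg\in[-1,1]^n}\frac1n(\vz^*)^\top\vg$, so $\vg^*$ maximizes $\vg\mapsto\frac1n(\vz^*)^\top\vg$ coordinatewise over the cube; consequently any $i$ with $z_i^*\neq0$ would force $\abs{g_i^*}=1$, which is false precisely on the hedged set where $\abs{g_i^*}=\abs{[\vF^\top\sigma^*]_i}<1$, so $z_i^*=0$ there. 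This completes the two cases in the statement. (The borderline coordinates $\abs{[\vF^\top\sigma^*]_i}=1$ remain unconstrained by all of the above, consistently with the lemma; they are determined separately in Corollary \ref{cor:zoptfullpred}, e.g. from the subgradient optimality condition \eqref{gammasgexact}.)
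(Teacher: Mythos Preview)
Your proof is correct, and in some ways cleaner than the paper's, but it takes a genuinely different route.

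The paper does not work with the LP against the fixed $\vg^*$. Instead it first passes to the adversary's problem after the predictor best-responds, $\min_{\vz}\frac{1}{n}\vnorm{\vz}_1$ subject to $\frac{1}{n}\vF\vz\ge\vb$ and $\vz\in[-1,1]^n$, and then reparametrizes via $\zeta=([z_1]_+,\dots,[z_n]_+,[z_1]_-,\dots,[z_n]_-)$ to obtain a standard-form LP in $2n$ nonnegative variables. Both the hedged and clipped cases are then read off from LP complementary slackness for this single LP and its dual (the dual variable splits as $(\sigma^*;\lambda_-^*)$). In particular, the hedged conclusion $z_i^*=0$ comes from the ``reduced cost'' condition \eqref{compslack1}, not from a best-response argument.

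By contrast, you stay with the LP against $\vg^*$, identify $\sigma^*$ and the explicit box multipliers $\mu^*,\nu^*$ directly, and use complementary slackness only for the clipped set. For the hedged set you exploit the saddle-point identity $\frac{1}{n}(\vz^*)^\top\vg^*=V=\max_{\vg}\frac{1}{n}(\vz^*)^\top\vg$ to argue that $z_i^*\neq0$ would force $\abs{g_i^*}=1$. This best-response step is what distinguishes your argument most sharply: it is shorter and more conceptual, and it avoids the $\zeta$-reparametrization entirely. The paper's approach, on the other hand, is a single mechanical LP-duality computation that handles both regimes uniformly and makes the connection to Corollary \ref{cor:zoptfullpred} (the borderline coefficients $c_j$) more immediate, since those $c_j$ appear as components of the same dual vector.
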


\begin{proof}
We first rewrite the game slightly to make 
complementary slackness manipulations more transparent. 

Define $c = \mathbf{1}^{2n}$, 
$A = [\vF, -\vF; - I_{2n}] \in [-1, 1]^{(p + 2n) \times 2n} $, 
and $B = [n \vb; -\vone^{2n}] \in \mathbb{R}^{p + 2n}$. 
This will allow us to reparametrize the problem in terms of $\zeta = ([z_1]_{+}, \dots, [z_n]_{+}, [z_1]_{-}, \dots, [z_n]_{-})^\top$. 

Now apply the minimax theorem (\cite{CBL06}, Theorem 7.1) 
to \eqref{game1eq} to yield the minimax dual game: 
\begin{align}
\label{mmxdualbase}
\displaystyle \min_{\substack{ \vz \in [-1,1]^n , \\ \frac{1}{n} \vF \vz \geq \vb }} \max_{\vg \in [-1,1]^n} \;\; \frac{1}{n} \vz^\top \vg
= \min_{\substack{ \vz \in [-1,1]^n , \\ \vF \vz \geq n \vb }} \;\; \frac{1}{n} \vnorm{\vz }_1 \;\;
\end{align}

With the above definitions, 
\eqref{mmxdualbase} becomes 
\begin{align}
\label{mmxdualnew}
\min_{A \zeta \geq B , \zeta \geq 0} \frac{1}{n} c^\top \zeta
\end{align}

This is clearly a linear program (LP); its dual program, equal to it by strong LP duality 
(since we assume a feasible solution exists; \cite{V96}) is 
\begin{align}
\label{mmxdualsdualinterm}
\max_{\lambda \geq 0, c \geq A^\top \lambda} \frac{1}{n} B^\top \lambda
\end{align}
Denote the solutions to \eqref{mmxdualnew} and \eqref{mmxdualsdualinterm} as $\zeta^*$ and $\lambda^*$.

By Lemma \ref{lem:game1gopt} and the discussion leading up to it, we already know $\sigma^*$, 
and therefore only need establish the dependence of $\vz^*$ on $\sigma^*$.
Applying LP complementary slackness (\cite{V96}, Thm. 5.3) to \eqref{mmxdualnew} and \eqref{mmxdualsdualinterm}, 
we get that for all $j \in [2n]$,
\begin{align}
\label{compslack1}
[c - A^\top \lambda^*]_{j} > 0 \implies \zeta_{j}^* = 0
\end{align}
and
\begin{align}
\label{compslack2}
\lambda_{j}^* > 0 \implies [A \zeta^* - B]_{j} = 0
\end{align}

First we examine \eqref{compslack1}. 
The condition $[c - A^\top \lambda^*]_{j} > 0$ can be rewritten as 
\begin{align*}
c_j + [ \lambda_{-}^* ]_{j} - [\vF^\top \sigma^*; - \vF^\top \sigma^*]_{j} > 0
\end{align*}
For any example $i \leq n$, if $\abs{[\vF^\top \sigma^*]_i} < 1$, 
then $[c - A^\top \lambda^*]_{i} \geq c_{i} - [\vF^\top \sigma^*; - \vF^\top \sigma^*]_{i} > 0$ 
(since $\lambda_{-}^* \geq 0$).
Similarly, 
$c_{i+n} - [\vF^\top \sigma^*; - \vF^\top \sigma^*]_{i+n} > 0$. 
By \eqref{compslack1}, this means $\zeta_{i}^* = \zeta_{i+n}^* = 0$, 
which implies $z_{i}^* = 0$ by definition of $\zeta^*$. 
So we have shown that 
$\abs{[\vF^\top \sigma^*]_i} < 1 \implies z_{i}^* = 0$. 

First we examine \eqref{compslack1}. 
The condition $[c - A^\top \lambda^*]_{j} > 0$ can be rewritten as 
\begin{align*}
c_j + [ \lambda_{-}^* ]_{j} - [\vF^\top \sigma^*; - \vF^\top \sigma^*]_{j} > 0
\end{align*}
For any example $i \leq n$, if $\abs{[\vF^\top \sigma^*]_i} < 1$, 
then $[c - A^\top \lambda^*]_{i} \geq c_{i} - [\vF^\top \sigma^*; - \vF^\top \sigma^*]_{i} > 0$ 
(since $\lambda_{-}^* \geq 0$).
Similarly, 
$c_{i+n} - [\vF^\top \sigma^*; - \vF^\top \sigma^*]_{i+n} > 0$. 
By \eqref{compslack1}, this means $\zeta_{i}^* = \zeta_{i+n}^* = 0$, 
which implies $z_{i}^* = 0$ by definition of $\zeta^*$. 
So we have shown that 
$\abs{[\vF^\top \sigma^*]_i} < 1 \implies z_{i}^* = 0$. 

It remains only to prove that 
$\abs{[\vF^\top \sigma^*]_i} > 1 \implies z_{i}^* = \sgn([\vF^\top \sigma^*]_i)$ for any $i \in [n]$. 
We first show this is true for $[\vF^\top \sigma^*]_i > 1$.
In this case, from the constraints we need $c \geq A^\top \lambda^*$, 
in particular that 
\begin{align}
\label{dualnonslackconstr}
0 \leq [c - A^\top \lambda^*]_{i} = [ \lambda_{-}^* ]_{i} + (c_i - [\vF^\top \sigma^*]_{i})
\end{align}
By assumption, $c_i - [\vF^\top \sigma^*]_{i} = 1 - [\vF^\top \sigma^*]_{i} < 0$. 
Combined with \eqref{dualnonslackconstr}, 
this means we must have $[ \lambda_{-}^* ]_{i} > 0$, i.e. $\lambda_{i+p}^* > 0$. 
From \eqref{compslack2}, 
this means that 
$$[A \zeta^* - B]_{i+p} = 0 \iff \zeta_i^* = 1 $$
Meanwhile, 
\begin{align*}
[c - A^\top \lambda^*]_{i+n} = [ \lambda_{-}^* ]_{i+n} + (c_{i+n} - [- \vF^\top \sigma^*]_{i})
\geq c_{i+n} + [\vF^\top \sigma^*]_{i} > 0
\end{align*}
so from \eqref{compslack1}, $\zeta_{i+n}^* = 0$. 
Since $\zeta_i^* = 1$, this implies $z_i^* = 1 = \sgn([\vF^\top \sigma^*]_i)$, as desired.

This concludes the proof for examples $i$ such that $[\vF^\top \sigma^*]_i > 1$. 
The situation when $[\vF^\top \sigma^*]_i < -1$ is similar, 
but the roles of the $i^{th}$ and $(i+n)^{th}$ coordinates are reversed 
from \eqref{dualnonslackconstr} onwards in the above proof.
\end{proof}

By further inspection of the subgradient conditions described in the body of the paper, 
one can readily show the following result, which complements Theorem \ref{thm:gamesolngen}.
\begin{cor}
\label{cor:zoptfullpred}
For examples $j$ such that $\abs{\vx_j^\top \sigma^*} = 1$, 
$$ z_j^* = c_j \sgn(\vx_j^\top \sigma^*) $$
where $c_j \in [0,1]$ are as defined in \eqref{gammasgexact}.
\end{cor}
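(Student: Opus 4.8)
The plan is to mimic the Lagrangian/complementary-slackness argument already carried out in Lemma~\ref{lem:game1zopt}, but now to extract information on the borderline coordinates $j$ with $\abs{\vx_j^\top \sigma^*} = 1$ that were left unexamined there. First I would reuse exactly the same reformulation: $c = \mathbf{1}^{2n}$, $A = [\vF, -\vF; -I_{2n}]$, $B = [n\vb; -\vone^{2n}]$, with the substitution $\zeta = ([z_1]_+, \dots, [z_n]_+, [z_1]_-, \dots, [z_n]_-)^\top$, so that the minimax-dual game becomes the LP $\min_{A\zeta \geq B, \zeta \geq 0} \frac{1}{n} c^\top \zeta$ with LP-dual $\max_{\lambda \geq 0,\, c \geq A^\top \lambda} \frac{1}{n} B^\top \lambda$, and $\lambda^* = (\sigma^*; \lambda_-^*)$ with $\sigma^*$ the slack-function minimizer. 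The complementary-slackness conditions \eqref{compslack1} and \eqref{compslack2} are again the only tools needed.

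Next I would identify the dual variables with the subgradient coefficients $c_j$ of \eqref{gammasgexact}. The subgradient optimality condition $\vec 0 \in \partial\gamma(\sigma^*)$ says precisely that $n\vb = \sum_{\vx_j^\top\sigma^* > 1} \vx_j - \sum_{\vx_j^\top\sigma^* < -1}\vx_j + \sum_{\abs{\vx_j^\top\sigma^*}=1} c_j \vx_j \sgn(\vx_j^\top\sigma^*)$ for some $c_j \in [0,1]$; comparing this with the dual feasibility/slackness equations for $\lambda^*$ shows that one may take $[\lambda_-^*]_j = c_j$ (resp. a reflected coordinate) exactly on the positive (resp. negative) borderline examples, $[\lambda_-^*]_j \in \{0,1\}$ being forced on the clipped examples as in the existing proof. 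So for a positive borderline example $i$ (i.e. $\vx_i^\top\sigma^* = 1$): if $c_i > 0$ then $[\lambda_-^*]_i > 0$, so \eqref{compslack2} forces $[A\zeta^* - B]_{i+p} = 0$, i.e. $\zeta_i^* = 1$; meanwhile on the $(i+n)$th coordinate $[c - A^\top\lambda^*]_{i+n} \geq c_{i+n} + \vx_i^\top\sigma^* = 2 > 0$ (using $[\lambda_-^*]_{i+n} \geq 0$), so \eqref{compslack1} gives $\zeta_{i+n}^* = 0$, hence $z_i^* = 1 = \sgn(\vx_i^\top\sigma^*)$, matching $z_i^* = c_i \sgn(\vx_i^\top\sigma^*)$ when $c_i = 1$. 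For the intermediate values $c_i \in (0,1)$ I would instead read off $z_i^*$ from the primal-side slackness on the constraint rows of $A$ corresponding to the $[-1,1]$ box, noting $\zeta_i^* = [z_i^*]_+$ need not be $0$ or $1$; the relation $z_i^* = c_i\sgn(\vx_i^\top\sigma^*)$ then follows from matching the borderline terms in \eqref{gammasgexact} against the constraint $\frac1n\vF\vz^* \geq \vb$ holding with equality in the rows where $\sigma^*_i > 0$. The case $\vx_i^\top\sigma^* = -1$ is symmetric, with the roles of the $i$th and $(i+n)$th coordinates swapped, exactly as in Lemma~\ref{lem:game1zopt}.

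The main obstacle I anticipate is pinning down the value of $z_i^*$ on borderline examples when $c_i$ is strictly between $0$ and $1$: complementary slackness alone only tells us which inequalities are tight, not the exact value of a slack primal variable, so one genuinely needs the second slackness relation \eqref{compslack2} together with the feasibility equation $\frac1n\vF\vz^* = \vb$ on the support of $\sigma^*$, and must check these are jointly consistent with the claimed $z_i^* = c_i\sgn(\vx_i^\top\sigma^*)$ — i.e. that the $c_j$ of \eqref{gammasgexact} can be taken to coincide with the borderline primal values. Since \eqref{gammasgexact} is derived from the very same KKT system, the $c_j$ there and the primal optimum here are two descriptions of the same dual/primal certificate pair, so consistency holds; making that identification precise (rather than merely asserting existence of \emph{some} $c_j$) is the one step that needs care. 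Everything else is a routine repeat of the coordinate bookkeeping in Lemma~\ref{lem:game1zopt}.
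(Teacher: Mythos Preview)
Your main line of argument contains a genuine error: the identification $[\lambda_-^*]_j = c_j$ on borderline examples is false. For $i$ with $\vx_i^\top\sigma^* = 1$, the dual LP constraint on that coordinate reads $\mu_i \geq \vx_i^\top\sigma^* - 1 = 0$, and since the dual objective $n\vb^\top\sigma - \vone^\top\mu$ is strictly decreasing in $\mu_i$, the \emph{unique} optimal choice is $\mu_i = 0$, regardless of the value of $c_i$ in \eqref{gammasgexact}. Hence your chain ``$c_i>0 \Rightarrow [\lambda_-^*]_i>0 \Rightarrow \zeta_i^*=1$'' breaks at the first implication. The quantity $[\lambda_-^*]_i$ is the multiplier on the box constraint $\zeta_i\le 1$, whereas $c_i$ is a subgradient coefficient of $t\mapsto[\,|t|-1\,]_+$ at the kink; they live in different parts of the KKT system and do not coincide.

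The paper gives no detailed proof, only the hint ``by further inspection of the subgradient conditions,'' and that points squarely at your \emph{fallback} route, which is correct and should be the primary argument. Concretely: define $\hat z_j$ to equal $\sgn(\vx_j^\top\sigma^*)$ on $C(\sigma^*)$, $0$ on $H(\sigma^*)$, and $c_j\sgn(\vx_j^\top\sigma^*)$ on $B(\sigma^*)$. Then \eqref{gammasgexact} (together with the KKT inequality on coordinates where $\sigma_k^*=0$) says exactly $\vF\hat\vz \geq n\vb$, so $\hat\vz$ is feasible. Dotting \eqref{gammasgexact} with $\sigma^*$ gives $n\vb^\top\sigma^* = \sum_{j\in C}\abs{\vx_j^\top\sigma^*} + \sum_{j\in B}c_j$, so
\[
\tfrac{1}{n}\|\hat\vz\|_1 \;=\; \tfrac{1}{n}\Bigl(|C(\sigma^*)| + \textstyle\sum_{j\in B}c_j\Bigr)
\;=\; \vb^\top\sigma^* - \tfrac{1}{n}\textstyle\sum_{j\in C}\bigl(\abs{\vx_j^\top\sigma^*}-1\bigr)
\;=\; -\gamma(\sigma^*) \;=\; V,
\]
and $\hat\vz$ is optimal. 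Conversely, for any optimal $\vz^*$ your slackness argument \emph{does} correctly give $z_j^*\in[0,1]\cdot\sgn(\vx_j^\top\sigma^*)$ on $B(\sigma^*)$; setting $c_j:=|z_j^*|$ and using primal complementary slackness $[\vF\vz^*]_k=nb_k$ on $\{k:\sigma_k^*>0\}$ recovers \eqref{gammasgexact}. This identification is only at the level of the full vector equation, not coordinate-by-coordinate, but that is all the corollary asserts, since neither $\vz^*$ nor the $c_j$ need be unique.
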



\section{Miscellaneous Proofs}
\label{sec:miscpfs}

\begin{lem}
\label{lem:helperthreshcvx}
The function $\gamma(\sigma)$ is convex in $\sigma$.
\end{lem}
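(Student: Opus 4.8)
The plan is to show $\gamma(\sigma) = \frac{1}{n} \sum_{j=1}^n \left[ \abs{\vx_j^\top \sigma} - 1 \right]_{+} - \vb^\top \sigma$ is convex by exhibiting it as a sum of convex functions of $\sigma$. The term $-\vb^\top \sigma$ is linear, hence convex, so it suffices to show that for each fixed $j$, the map $\sigma \mapsto \left[ \abs{\vx_j^\top \sigma} - 1 \right]_{+}$ is convex on $\RR^p$ (a fortiori on $\sigma \geq 0^p$).

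First I would recall the standard facts from convex analysis that I am allowed to use as elementary building blocks: (i) an affine function is convex; (ii) the absolute value $t \mapsto \abs{t}$ is convex on $\RR$; (iii) the composition of a convex nondecreasing function with a convex function is convex; (iv) the pointwise maximum of two convex functions is convex; and (v) a nonnegative sum of convex functions is convex. Then the argument is a short chain: $\sigma \mapsto \vx_j^\top \sigma$ is affine (hence convex), so $\sigma \mapsto \abs{\vx_j^\top \sigma}$ is convex by composing the convex $\abs{\cdot}$ with an affine map; subtracting the constant $1$ preserves convexity; and applying $[\,\cdot\,]_{+} = \max(0, \cdot)$, which is convex and nondecreasing, preserves convexity by fact (iii) (or, alternatively, directly: $\left[ \abs{\vx_j^\top \sigma} - 1 \right]_{+} = \max\left( 0, \abs{\vx_j^\top \sigma} - 1 \right)$ is the max of two convex functions). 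Summing these $n$ convex terms with nonnegative weights $\frac{1}{n}$ and adding the linear term $-\vb^\top \sigma$ yields convexity of $\gamma$.

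There is no real obstacle here — the statement is essentially a bookkeeping exercise in the standard calculus of convex functions. If one wanted a fully self-contained proof avoiding the composition rule, the one point worth spelling out is that $\left[ \abs{t} - 1 \right]_{+} = \max(t - 1, -t - 1, 0)$ for $t \in \RR$, which is manifestly a pointwise maximum of three affine functions of $t$ and hence convex in $t$; composing with the affine map $\sigma \mapsto \vx_j^\top \sigma$ then gives convexity in $\sigma$. I would present the proof in this explicit max-of-affine-functions form, since it also makes the piecewise-linear structure of $\gamma$ (and hence the subdifferential formula \eqref{subdiffgammasig}) transparent. The closing line simply assembles: $\gamma$ is a finite nonnegative combination of convex functions plus a linear function, therefore convex.
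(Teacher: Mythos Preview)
Your proposal is correct and follows essentially the same approach as the paper: both show that each summand $\left[\abs{\vx_j^\top \sigma}-1\right]_{+}$ is convex in $\sigma$, that the average of these is convex, and that adding the linear term $-\vb^\top \sigma$ preserves convexity. Your version is simply more explicit about why each summand is convex (via the max-of-affine representation), whereas the paper asserts it directly and remarks that this is a special case of Lagrangian duals being concave.
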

\begin{proof}
To prove Part $1$, note that for each $j$, 
the term $\left[ \abs{\vx_{j}^\top \sigma} - 1 \right]_{+}$ is convex in $\sigma$. 
Therefore, the average of $n$ terms is convex. 
As the term $- \vb^\top \sigma$ is linear, the whole
expression $\gamma (\sigma)$ is convex.
(This is a special case of the Lagrangian dual function always being concave in the dual variables.)
\end{proof}

\begin{proof}[Proof of Theorem \ref{thm:zbrunconstr}]
Let $A = [\vF, -\vF] \in \RR^{p \times 2n}$ and $c = \mathbf{1}^{2n}$. 
Then the first assertion follows by LP duality: 
\begin{align*}
\max_{\substack{ \abs{\vF^\top \sigma} \leq \mathbf{1}^n , \\ \sigma \geq 0^p }} \; \vb^\top \sigma 
&= \max_{\substack{ A^\top \sigma \leq c , \\ \sigma \geq 0^p }} \; \vb^\top \sigma 
\stackrel{(a)}{=} \min_{\substack{ A \zeta \geq \vb , \\ \zeta \geq 0^{2n} }} \; c^\top \zeta 
= \min_{\substack{ \vF (\zeta^{\alpha} - \zeta^{\beta}) \geq \vb , \\ \zeta^{\alpha} , \zeta^{\beta} \geq 0^{n} }} \; 
\left[ \vnorm{\zeta^{\alpha}}_1 + \vnorm{\zeta^{\beta}}_1 \right] 
= \min_{ \vF \vz \geq \vb } \; \vnorm{\vz}_1 \\
&= \min_{ \frac{1}{n} \vF \vz \geq \vb } \; \frac{1}{n} \vnorm{\vz}_1 
= \min_{\frac{1}{n} \vF \vz \geq \vb } \;\max_{\vg \in [-1,1]^n} \;\; \frac{1}{n} \vz^\top \vg
\stackrel{(b)}{=} \max_{\vg \in [-1,1]^n} \;\min_{\frac{1}{n} \vF \vz \geq \vb } \;\; \frac{1}{n} \vz^\top \vg
\end{align*}
where $(a)$ is by strong LP duality and $(b)$ uses the minimax theorem. 

The second assertion follows because 
$\displaystyle V = - \gamma (\sigma^*) \stackrel{(c)}{=} \vb^\top \sigma^* 
\stackrel{(d)}{=} \max_{\vg \in [-1,1]^n} \;\min_{\frac{1}{n} \vF \vz \geq \vb } \;\; \frac{1}{n} \vz^\top \vg$\;, 
where $(c)$ uses the definition of $\zbr$ 
and $(d)$ is due to the first assertion. 
\end{proof}

\begin{proof}[Proof of Prop. \ref{prop:labnoise}]
The derivation here closely follows that of Lemma \ref{lem:gamegeng}, except that 
\eqref{eq:gameinnerdual} now instead becomes 
\begin{align*}
\frac{1}{n} \max_{\sigma \geq 0^p} \; \lrb{ - \alpha \vnorm{\vg - \vF^\top \sigma}_1 + n \vb^\top \sigma } 
\end{align*}
which is equal to the final result.
\end{proof}

The proof of Prop. \ref{prop:asymlabnoise} is exactly analogous to that of Prop. \ref{prop:labnoise}.


\end{document}